\documentclass{article} 
\usepackage{iclr2026_conference,times}

\usepackage{amsmath,amsfonts,bm}

\def\eqref#1{equation~\ref{#1}}

\def\1{\bm{1}}

\def\vone{{\bm{1}}}

\def\vtheta{{\bm{\theta}}}

\def\ve{{\bm{e}}}

\def\vg{{\bm{g}}}

\def\vp{{\bm{p}}}

\def\vr{{\bm{r}}}

\def\vv{{\bm{v}}}

\def\vx{{\bm{x}}}
\def\vy{{\bm{y}}}
\def\vz{{\bm{z}}}

\def\mA{{\bm{A}}}
\def\mB{{\bm{B}}}

\def\mH{{\bm{H}}}
\def\mI{{\bm{I}}}

\def\mR{{\bm{R}}}

\def\mW{{\bm{W}}}
\def\mX{{\bm{X}}}

\DeclareMathAlphabet{\mathsfit}{\encodingdefault}{\sfdefault}{m}{sl}
\SetMathAlphabet{\mathsfit}{bold}{\encodingdefault}{\sfdefault}{bx}{n}

\newcommand{\R}{\mathbb{R}}

\newcommand{\softmax}{\mathrm{softmax}}

\newcommand{\Var}{\mathrm{Var}}

\usepackage{hyperref}
\usepackage{url}
\usepackage{amsmath,amssymb,amsthm,mathtools}
\usepackage{bm}

\usepackage{algorithm}
\usepackage{algpseudocode}
\usepackage{float}

\usepackage{booktabs}
\usepackage{multirow}
\usepackage{wrapfig}
\usepackage{enumitem}

\newtheorem{theorem}{Theorem}[section]
\newtheorem{lemma}[theorem]{Lemma}
\newtheorem{proposition}[theorem]{Proposition}
\newtheorem{corollary}[theorem]{Corollary}

\newtheorem{definition}[theorem]{Definition}
\newtheorem{assumption}[theorem]{Assumption}

\DeclareMathOperator{\Diag}{Diag}   
\DeclareMathOperator{\rank}{rank}   

\title{Sharpness-Aware Minimization in Logit Space Efficiently Enhances Direct Preference Optimization}

\author{
\textbf{Haocheng Luo}$^{1}$ \quad
\textbf{Zehang Deng}$^{2}$ \quad
\textbf{Thanh-Toan Do}$^{1}$ \quad
\textbf{Mehrtash Harandi}$^{1}$ \\
\textbf{Dinh Phung}$^{1}$ \quad
\textbf{Trung Le}$^{1}$ \\
$^{1}$Monash University, Australia \\
$^{2}$Swinburne University of Technology, Australia \\
\texttt{\{haocheng.luo, toan.do, mehrtash.harandi, dinh.phung,} \\
\texttt{trunglm\}@monash.edu} \\
\texttt{zehangdeng@swin.edu.au}
}

\iclrfinalcopy 
\begin{document}

\maketitle

\begin{abstract}
Direct Preference Optimization (DPO) has emerged as a popular algorithm for aligning pretrained large language models with human preferences, owing to its simplicity and training stability. 
However, DPO suffers from the recently identified \emph{squeezing effect} (also known as \emph{likelihood displacement}), where the probability of preferred responses decreases unintentionally during training. 
To understand and mitigate this phenomenon, we develop a theoretical framework that models the coordinate-wise dynamics in logit space. 
Our analysis reveals that negative-gradient updates cause residuals to expand rapidly along high-curvature directions, which underlies the squeezing effect, whereas Sharpness-Aware Minimization (SAM) can suppress this behavior through its curvature-regularization effect. 
Building on this insight, we investigate \emph{logits\mbox{-}SAM}, a computationally efficient variant that perturbs only the output layer with negligible overhead. 
Extensive experiments on Pythia-2.8B, Mistral-7B, and Gemma-2B-IT across multiple datasets and benchmarks demonstrate that logits-SAM consistently improves the effectiveness of DPO and integrates seamlessly with other DPO variants. Code is available at \url{https://github.com/RitianLuo/logits-sam-dpo}.
\end{abstract}

\section{Introduction}
Reinforcement learning from human feedback (RLHF) \citep{christiano2017deep,stienon2020learning,ouyang2022training} is a crucial technique for aligning pretrained large language models (LLMs) with human preferences to ensure helpfulness, harmlessness, and safety \citep{bai2022training,dai2023safe}. 
Its pipeline typically comprises three stages: supervised fine-tuning (SFT), reward modeling, and policy optimization. 
Classical policy optimization methods such as Proximal Policy Optimization (PPO) \citep{schulman2017proximal}, while widely used for their effectiveness, depend heavily on the quality of the learned reward model, rendering training complex and often unstable. 
Direct Preference Optimization (DPO) \citep{rafailov2024directpreferenceoptimizationlanguage} is a recently proposed and promising offline alternative that, by reparameterizing the implicit reward and optimizing a closed-form objective on preference data, trains the policy directly without explicitly fitting a reward model. 
DPO has gained traction due to its algorithmic simplicity and training stability.

Despite DPO and its many variants demonstrating state-of-the-art performance across a range of tasks, several potential issues remain. 
A particularly important one is the recently identified \emph{squeezing effect} \citep{ren2024learning} (also known as \emph{likelihood displacement} \citep{razin2024unintentional}), 
which describes an unintended decrease in the generation probability of preferred responses during DPO training, contrary to the intended goal of increasing it embodied in the DPO objective. 
This phenomenon can lead to performance degradation, reduced safety, and even alignment failure \citep{pal2024smaug,yuan2024advancing,rafailov2024r,tajwar2024preference,pang2024iterative}.

To understand the mechanism behind the squeezing effect and to identify an effective remedy, 
we develop a theoretical framework that elucidates the learning dynamics in both the parameter space and the logit space. 
Our analysis shows that the negative-gradient updates induced by the negative objective associated with rejected answers in DPO cause the residual vector to expand rapidly along high-curvature directions, namely along the eigenvectors associated with large eigenvalues of the Hessian, which is the source of the squeezing effect. This raises a natural question: \emph{Can curvature-aware training mitigate this unintended drift?}

We investigate \emph{Sharpness-Aware Minimization} (SAM) \citep{Foret2021}, a bilevel optimization method widely used in supervised learning, and establish its dynamics in both the parameter and logit spaces. 
Our theory demonstrates that SAM effectively alleviates the squeezing effect through its intrinsic curvature regularization. 
Guided by these insights, we advocate using \emph{logits\mbox{-}SAM} for DPO training, a computationally efficient variant of SAM that perturbs only the output-layer parameters. 
Although logits-SAM has been mentioned merely as a byproduct in prior work \citep{baek2024sam,singh2025avoiding} and often overlooked, our study turns this neglected variant into a practically useful and effective technique by integrating it into DPO, where it efficiently mitigates the squeezing effect and consistently improves performance. 
To the best of our knowledge, this is the first work to analyze and apply SAM in the context of DPO.

Our contributions are summarized as follows:
\begin{itemize}[leftmargin=*]
\item We develop a theoretical framework that connects the parameter space and the logit space through geometric properties, enabling a unified analysis of learning dynamics in both domains. This framework yields unified dynamical equations for gradient descent (GD) and SAM that precisely track coordinate-wise evolution with controlled error terms. 
\item Our analysis identifies the root cause of the squeezing effect: under negative-gradient updates, residuals expand rapidly along high-curvature directions. We rigorously show that SAM, through its intrinsic curvature regularization, effectively alleviates this phenomenon.
\item Bridging theory and practice, we implement an efficient variant, \emph{logits\mbox{-}SAM}, which perturbs only the output-layer parameters. Unlike vanilla SAM, it incurs virtually no additional overhead. Experiments on Pythia\mbox{-}2.8B, Mistral\mbox{-}7B, and Gemma\mbox{-}2B-IT across multiple datasets and benchmarks validate its effectiveness, demonstrating consistent performance gains for DPO and its variants.
\end{itemize}

\section{Preliminaries}
\subsection{Preference optimization}
\textbf{SFT–RLHF pipeline.} Classical RLHF alignment proceeds in three phases: (i) \emph{supervised fine-tuning} of a base policy on instruction-following data; (ii) \emph{reward modeling} by fitting a scalar reward function on pairwise human preferences; and (iii) \emph{policy optimization} to maximize the learned reward under a KL regularizer toward a reference policy.

\textbf{DPO reparameterization.} DPO \citep{rafailov2024directpreferenceoptimizationlanguage} bypasses training an explicit reward model by expressing an \emph{implicit} reward for a policy $\pi_\vtheta$ as a log-likelihood ratio to a fixed reference policy $\pi_{\mathrm{ref}}$ (typically the SFT model):
\begin{equation}
r_\vtheta(\vx,\vy)\;=\;\beta\,\log\frac{\pi_\vtheta(\vy\!\mid\!\vx)}{\pi_{\mathrm{ref}}(\vy\!\mid\!\vx)}\;+\;\beta\log Z(\vx),
\label{eq:dpo-reward}
\end{equation}
where $\beta>0$ is a temperature and $Z(\vx)$ is a partition term independent of $\vtheta$. Combining \eqref{eq:dpo-reward} with the Bradley--Terry preference model \citep{bradley1952rank} $p(\vy^+\!\succ\! \vy^-\!\mid \vx)=\sigma\!\big(r_\vtheta(\vx,\vy^+)-r_\vtheta(\vx,\vy^-)\big)$ yields the standard DPO objective, optimized over a dataset $\mathcal{D}=\{(\vx,\vy^+,\vy^-)\}$ of preferred/dispreferred pairs:
\begin{equation}
\mathcal{L}_{\mathrm{DPO}}(\pi_\vtheta;\pi_{\mathrm{ref}})
=
-\;\mathbb{E}_{(\vx,\vy^{+},\vy^{-})\sim \mathcal{D}}
\left[
\log \sigma\!\left(
\beta \log \frac{\pi_\vtheta(\vy^{+}\!\mid \vx)}{\pi_{\mathrm{ref}}(\vy^{+}\!\mid \vx)}
-
\beta \log \frac{\pi_\vtheta(\vy^{-}\!\mid \vx)}{\pi_{\mathrm{ref}}(\vy^{-}\!\mid \vx)}
\right)
\right],
\label{eq:dpo-objective}
\end{equation}
where $\sigma(\cdot)$ is the logistic function. 
\subsection{Sharpness-aware minimization}
SAM regularizes training by explicitly penalizing \emph{parameter-space sharpness}: it chooses parameters that minimize the worst-case loss within an $\ell_{2}$ ball of radius $\rho$ around $\vtheta$. Concretely, for supervised learning with examples $(\vx,\vy)\!\sim\!\mathcal{D}$ and per-example loss $f(\vtheta;\vx,\vy)$, the SAM objective is
\begin{equation}
\min_{\vtheta}\;\; \mathbb{E}_{(\vx,\vy)\sim\mathcal{D}}\!\left[\,\max_{\lVert \boldsymbol{\epsilon}\rVert_2\le\rho}\; f\!\big(\vtheta+\boldsymbol{\epsilon}\,;\,\vx,\vy\big)\right].
\label{eq:sam-minmax}
\end{equation}
This formulation can be interpreted as a form of \emph{curvature regularization}: 
by seeking minimizers whose neighborhoods exhibit consistently low loss, SAM favors flatter minima that often correlate with improved generalization. In practice, the inner maximization is approximated to first order by the perturbation 
$\boldsymbol{\epsilon}^*(\vtheta)\!=\!\rho\,\nabla_{\vtheta}f(\vtheta;\vx,\vy)\big/\lVert\nabla_{\vtheta}f(\vtheta;\vx,\vy)\rVert_2$, and one takes a descent step using the gradient at the perturbed point, $\nabla_{\vtheta}f(\vtheta+\boldsymbol{\epsilon}^*;\vx,\vy)$.
\section{Learning dynamics in logit space}
\subsection{Setting}
We adopt the same theoretical setting as in \cite{ren2024learning}, namely multiclass logistic classification,
where the features of the samples are fixed (also referred to as the kernel regime \citep{malladi2023kernel}). In this abstraction, we keep the per-example loss nonnegative, and model the two DPO terms
by allowing the \emph{effective} update direction to take either sign: the preferred term ($\vy^+$) corresponds
to standard descent with $\eta>0$, whereas the rejected term ($\vy^-$) induces \emph{negative-gradient updates},
which we represent as an update with $\eta<0$. For analytical convenience, we follow \cite{ren2024learning} and encode these negative-gradient updates as gradient
descent with a negative effective learning rate, which is equivalent to using a positive learning rate on a negated
objective (see Appendix~\ref{app:equivalence} for a short derivation). 

Prior work \citep{ren2024learning} has shown that the resulting negative-gradient dynamics,
including the characteristic squeezing effect, can be faithfully reproduced within this simplified setting.
Several phenomena observed in this multiclass logistic regression abstraction also emerge empirically during real
LLM fine-tuning. These findings suggest that analyzing DPO through this framework offers a theoretically tractable
and practically relevant perspective on the learning behavior of LLMs.

Let $\vx$ be a training example with one-hot label $\vy \in \{0,1\}^V,\ \vone^\top \vy = 1$.
In the fixed-feature (kernel) regime, $\phi(\vx)\in\R^d$ are fixed and
\[
\vz^t = \mW^t \phi(\vx)\in\R^V,\qquad
\vp^t = \softmax(\vz^t),\qquad
f(\vz^t,\vy)=-\sum_{k=1}^V y_k\log p^t_k,
\]
where $\mW^t \in \R^{V \times d}$ are trainable parameters, $\vz^t$ are the logits. For notational convenience, we write $\phi(\vx)$ as $\phi$. 
We use $\|\cdot\|$ to denote the $\ell_2$ norm for vectors and the Frobenius norm for matrices. We use $\otimes$ to denote the Kronecker product.

We denote the parameter Hessian by $\mH^t_\mW \coloneqq \nabla^2_{\mW} f(\vz^t,\vy)\in\R^{Vd\times Vd}$, and $\mu \coloneqq \|\phi\|^2.$ We denote the logit gradient by $\vg^t \coloneqq \nabla_{\vz} f(\vz^t,\vy)=\vp^t - \vy\in\R^V$, and denote the logit Hessian by $\mH^t_\vz \coloneqq \nabla^2_{\vz} f(\vz^t,\vy)\in\R^{V\times V}$.
\subsection{Theory}
The theoretical results of \cite{ren2024learning} demonstrate that the \emph{squeezing effect} arises from
negative-gradient updates.
Specifically, they show that the probability of the ground-truth label necessarily decreases,
while the probability of the model's most confident incorrect class increases.
Building on this, we provide a finer-grained analysis of the learning dynamics.
We develop a unified modeling framework that tracks the residuals of all classes
and characterizes their coordinate-wise evolution with linear convergence up to higher-order
remainders. Using this framework, we further establish rigorous conditions under which
SAM effectively mitigates the squeezing effect.

For GD, first-order derivatives are sufficient to characterize its dynamics. However, the intrinsic curvature regularization effect of SAM motivates us to further investigate the geometric structure of the parameter space through the Hessian matrix. To this end, we develop a theoretical framework that connects the geometry of the parameter space and the logit space, via the link between the parameter Hessian and the logit Hessian.
\begin{proposition}[Geometry of the logit space; simplified version of Proposition~\ref{app:geometry}]
In coordinates, \(\mH_{\mW}
=\bigl(\phi\phi^\top\bigr) \otimes \mH_{\vz}.
\)
Thus, if $\phi\neq \bm{0}$, then
\(
\operatorname{rank}(\mH_{\mW})=\operatorname{rank}(\mH_{\vz}).
\)
Moreover, the second-order effect of any parameter perturbation depends only on the induced logits perturbation $T_\phi(\Delta\mW)\coloneqq\Delta\mW\,\phi$.
\end{proposition}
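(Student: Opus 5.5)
The plan rests on one observation: the logits depend \emph{linearly} on the parameters, $\vz=\mW\phi$. So $f(\mW)=f(\mW\phi,\vy)$ is a composition of the smooth map $\vz\mapsto f(\vz,\vy)$ with a linear map, and the chain rule produces no second-derivative term from the inner map. I will work with directional derivatives rather than index gymnastics. For a perturbation $\Delta\mW\in\R^{V\times d}$, the induced logit perturbation is $T_\phi(\Delta\mW)=\Delta\mW\,\phi$, and
\[
f\bigl((\mW+\Delta\mW)\phi,\vy\bigr)=f\bigl(\vz+T_\phi(\Delta\mW),\vy\bigr).
\]
Taylor expanding in $\vz$ gives the first-order term $\langle \vg,\Delta\mW\phi\rangle=\langle \vg\phi^\top,\Delta\mW\rangle$. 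It also gives the second-order term $\tfrac12\,(\Delta\mW\phi)^\top\mH_\vz(\Delta\mW\phi)$. The second-order term depends on $\Delta\mW$ only through $T_\phi(\Delta\mW)$, which is the last claim of the proposition. The same reasoning applies to every higher-order term: any $\Delta\mW$ with $\Delta\mW\phi=\bm 0$ leaves the loss unchanged.

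To obtain the Kronecker form, I fix the coordinate convention: $\mathrm{vec}$ stacks the columns of $\mW$, so that $\mathrm{vec}(\mW)\in\R^{Vd}$. With this convention, the standard identity $\mathrm{vec}(\mA\mX\mB)=(\mB^\top\otimes\mA)\mathrm{vec}(\mX)$ gives
\[
\Delta\mW\phi=(\phi^\top\otimes\mI_V)\,\mathrm{vec}(\Delta\mW).
\]
The quadratic form above therefore equals
\[
\mathrm{vec}(\Delta\mW)^\top(\phi\otimes\mI_V)\,\mH_\vz\,(\phi^\top\otimes\mI_V)\,\mathrm{vec}(\Delta\mW).
\]
Using the mixed-product rule, $(\phi\otimes\mI_V)(1\otimes\mH_\vz)(\phi^\top\otimes\mI_V)=(\phi\phi^\top)\otimes\mH_\vz$. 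Both sides are symmetric matrices defining the same quadratic form (polarization), so $\mH_\mW=(\phi\phi^\top)\otimes\mH_\vz$.

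The rank statement then follows from $\rank(\mA\otimes\mB)=\rank(\mA)\rank(\mB)$. If $\phi\neq\bm 0$, then $\rank(\phi\phi^\top)=1$, so $\rank(\mH_\mW)=\rank(\mH_\vz)$. As a cross-check, the eigenvalues of $\mH_\mW$ are $\mu\lambda_i(\mH_\vz)$ together with zeros, with $\mu=\|\phi\|^2$. This is presumably the form used later.

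I expect the only genuine subtlety to be bookkeeping of conventions. If one vectorizes row-wise instead of column-wise, the identical argument yields $\mH_\vz\otimes(\phi\phi^\top)$. So I will state explicitly that ``in coordinates'' means column-stacking, and remark that the two forms are permutation-similar, which leaves the rank and spectrum claims unaffected. Everything else is routine chain-rule computation, using that $\vz$ is linear in $\mW$.
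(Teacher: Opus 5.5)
Your proposal is correct and follows essentially the same route as the paper. The matrix $(\phi\otimes\mI_V)\,\mH_{\vz}\,(\phi^\top\otimes\mI_V)$ you obtain is exactly the paper's pullback identity $\mH_{\mW}=T_\phi^\ast\,\mH_{\vz}\,T_\phi$ written in column-stacked coordinates, reached via the same identity $\mathrm{vec}(\mA\mX\mB)=(\mB^\top\otimes\mA)\,\mathrm{vec}(\mX)$, with the same column-stacking convention. The only small difference is the rank step: you use $\rank(\mA\otimes\mB)=\rank(\mA)\rank(\mB)$, while the paper argues from surjectivity of $T_\phi$ and injectivity of $T_\phi^\ast$; both are valid.
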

This proposition establishes that all second-order effects in the 
parameter space, whose Hessian $\mH_{\mW}$ lies in 
$\mathbb{R}^{Vd \times Vd}$, can be equivalently studied through the 
logit Hessian $\mH_{\vz}$ in $\mathbb{R}^{V \times V}$, thereby 
greatly simplifying the analysis of second-order dynamics. Next, building on this geometric correspondence, we derive unified dynamical
equations for SAM in both the parameter space and the logit space.
Unlike prior work, these dynamics simultaneously capture the evolution of all
coordinates of the parameters, logits, and residuals, with precise control over
the error terms.

\begin{theorem}[SAM dynamics in parameter and logit space; informal version of Theorem~\ref{app:dynamics}]\label{theorem:sam dynamics}
Assume that we conduct the SAM update for $\mW$. Under mild assumptions, there exists a constant $C>0$ such that the following expansions hold with $O(\eta^2)$ remainders:
\begin{align*}
&\textbf{(parameters)}\quad
\mW^{t+1}
=
\mW^{t}-\eta\Big(\vg^{t}\,\phi^\top+\underbrace{\tilde\rho^{\,t}\,\mH_{\vz}^{t}\vg^{t}\,\phi^\top}_{\text{SAM's correction}}\Big)
+\mR_{\mW}^{t},
\qquad \|\mR_{\mW}^{t}\| \le C\,\eta^2,
\\[1mm]
&\textbf{(logits)}\quad
\vz^{t+1}
=
\vz^{t}-\eta\,\mu\Big(\vg^{t}+\underbrace{\tilde\rho^{\,t}\,\mH_{\vz}^{t}\vg^{t}}_{\text{SAM's correction}}\Big)
+\vr_{\vz}^{t},
\qquad \|\vr_{\vz}^{t}\|\le C\,\eta^2,
\\[1mm]
&\textbf{(residuals)}\quad\vg^{t+1}=\vp^{t+1}-\vy
=
\Big(\mI-\eta\,\mu\,\mH_{\vz}^{t}-\underbrace{\eta\,\mu\,\tilde\rho^{\,t}\,(\mH_{\vz}^{t})^{2}}_{\text{SAM's correction}}\Big)(\vp^{t}-\vy)
+\vr_{\vg}^{t},\quad\|\vr_{\vg}^{t}\|\le C\,\eta^2,
\end{align*}
where $\tilde\rho^{\,t}\coloneqq \rho\,\sqrt{\mu}/\|\vg^t\|$ is the equivalent perturbation coefficient.
\end{theorem}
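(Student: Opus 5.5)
The plan is to prove the three expansions in order, each feeding the next, with every step a Taylor expansion around the current iterate. The remainders are controlled by bounds on derivatives of softmax up to third order, which are universal constants since $\vp$ lies in the simplex, and by $\mu=\|\phi\|^2$.

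\textbf{Parameter update.} Write $F(\mW)=f(\mW\phi,\vy)$. By the chain rule, $\nabla_\mW F=\vg\,\phi^\top$, so $\|\nabla_\mW F\|=\|\vg\|\sqrt{\mu}$. The SAM perturbation is therefore $\boldsymbol{\epsilon}^t=\rho\,\vg^t\phi^\top/(\|\vg^t\|\sqrt\mu)$. Its induced logit perturbation is $T_\phi(\boldsymbol{\epsilon}^t)=\boldsymbol{\epsilon}^t\phi=\tilde\rho^{\,t}\vg^t$, which explains the definition of $\tilde\rho^{\,t}$. The SAM gradient is $\nabla_\vz f(\vz^t+\tilde\rho^{\,t}\vg^t,\vy)\,\phi^\top$. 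Expanding $\nabla_\vz f$ to first order, using the geometry proposition so that the second-order behaviour is governed by $\mH_\vz$, gives
\[
\nabla_\vz f(\vz^t+\tilde\rho^{\,t}\vg^t)=\vg^t+\tilde\rho^{\,t}\mH_\vz^t\vg^t+O\bigl((\tilde\rho^{\,t})^2\|\vg^t\|^2\bigr).
\]
Here $\tilde\rho^{\,t}\|\vg^t\|=\rho\sqrt\mu$ is bounded. The remainder is multiplied by $\eta$, so to make $\eta\cdot O(\rho^2\mu)=O(\eta^2)$ I need the ``mild assumption'' $\rho=O(\sqrt\eta)$, or $\rho\le c\,\eta$; this is the assumption I would state. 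Collecting terms gives the parameter expansion with $\mR_\mW^t=-\eta\,\cdot(\text{remainder})\,\phi^\top$, so $\|\mR_\mW^t\|\le C\eta^2$.

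\textbf{Logit update.} I would right-multiply the parameter identity by $\phi$ and use $\phi^\top\phi=\mu$. This gives the logit expansion directly, with $\vr_\vz^t=\mR_\mW^t\phi$ and $\|\vr_\vz^t\|\le\|\mR_\mW^t\|\sqrt\mu$.

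\textbf{Residual update.} Since $\vy$ is fixed, $\vg^{t+1}-\vg^t=\vp^{t+1}-\vp^t$. Softmax has Jacobian $\Diag(\vp)-\vp\vp^\top=\mH_\vz$ with bounded second derivative. Writing $\Delta\vz=\vz^{t+1}-\vz^t=-\eta\mu(\vg^t+\tilde\rho^{\,t}\mH_\vz^t\vg^t)+\vr_\vz^t$, I get
\[
\vp^{t+1}=\vp^t+\mH_\vz^t\Delta\vz+O(\|\Delta\vz\|^2).
\]
Since $\|\vg^t\|\le\sqrt2$, $\|\mH_\vz^t\|\le1$, and $\tilde\rho^{\,t}\|\vg^t\|=\rho\sqrt\mu$, we have $\|\Delta\vz\|=O(\eta)$, so the quadratic term is $O(\eta^2)$. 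Likewise $\|\mH_\vz^t\vr_\vz^t\|=O(\eta^2)$. Substituting gives $(\mI-\eta\mu\mH_\vz^t-\eta\mu\tilde\rho^{\,t}(\mH_\vz^t)^2)\vg^t+\vr_\vg^t$.

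\textbf{Main obstacle.} The delicate point is the first step. The normalization $1/\|\vg^t\|$ can blow up as $\vg^t\to0$, but only the product $\tilde\rho^{\,t}\vg^t$ enters the expansions, and its norm is exactly $\rho\sqrt\mu$. So the difficulty is choosing the scaling assumption linking $\rho$ and $\eta$, together with bounded $\mu$, so that the curvature remainder is uniformly $O(\eta^2)$ with a single constant $C$ independent of $t$. For negative $\eta$ the argument is unchanged, with $|\eta|$ in the bounds. I also need $\vg^t\neq0$ so that the SAM step is well-defined.
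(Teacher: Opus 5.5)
Your proposal is correct and follows essentially the same route as the paper. You Taylor-expand the gradient at the SAM-perturbed point under the scaling $|\rho|\le\kappa\sqrt{|\eta|}$, right-multiply by $\phi$ to obtain the logits, and then expand $\vg=\vp-\vy$ to first order in $\Delta\vz=O(\eta)$ using a uniform third-derivative bound for softmax cross-entropy; the one cosmetic difference is that you push the perturbation into logit space first ($\boldsymbol{\epsilon}^t\phi=\tilde\rho^{\,t}\vg^t$) and expand $\nabla_\vz f$ there, whereas the paper expands $\nabla_\mW F$ in parameter space via the pullback $\mH_\mW=T_\phi^\ast\mH_\vz T_\phi$ and $\|\nabla^3_\mW F\|\le L\mu^{3/2}$, which gives the same bound.
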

When $\rho = 0$, the dynamics reduce to standard GD.
Expressed through the logit Hessian, both GD and SAM share a unified
dynamical structure across parameter, logit, and residual spaces. In both parameter
and logit space, GD amounts to scaling by the logit gradient, whereas SAM
introduces an additional $\mH_\vz$ correction term that can be regarded as a
preconditioning matrix, rescaling updates differently along the eigen-directions
of $\mH_\vz$. 

Moreover, the updates of the
residual vector under GD and SAM are both preconditioned by $\mH_\vz$ (and, for
SAM, by $(\mH_\vz)^2$).
This implies that if we choose the eigenvectors of the logit Hessian as a basis, 
the curvature coupling effects of both the first-order and second-order terms can be unified. 
To formalize this intuition, we show that $\vg$ lies precisely in the column space of $\mH_\vz$, thus we can select the nonzero eigenvectors of $\mH_\vz$ as a basis to obtain the coordinate representation of $\vg$. 
\begin{proposition}
$\mH_{\vz}$ is symmetric positive semidefinite with 
$\ker(\mH_{\vz})=\mathrm{span}\{\vone\}$ and $\operatorname{rank}(\mH_{\vz})=V-1$. 
Moreover, for the residual $\vg$ we have $\vone^\top\vg=0$, hence $\vg\in\vone^\perp=\mathrm{range}(\mH_{\vz})$; in particular, given any eigenbasis of $\mH_{\vz}$ restricted to $\vone^\perp$, $\vg$
admits a unique coordinate representation in that basis.
\end{proposition}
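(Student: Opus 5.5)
We begin by computing the logit Hessian explicitly. Since $f(\vz,\vy)=-\sum_k y_k\log p_k=-\vy^\top\vz+\log\sum_j e^{z_j}$ (using $\vone^\top\vy=1$), we have $\vg=\nabla_\vz f=\vp-\vy$. Differentiating once more gives
\[
\mH_\vz=\Diag(\vp)-\vp\vp^\top ,
\]
which is clearly symmetric and independent of $\vy$.

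\textbf{Positive semidefiniteness.} For any $\vv\in\R^V$,
\[
\vv^\top\mH_\vz\vv=\sum_k p_k v_k^2-\Big(\sum_k p_k v_k\Big)^2 .
\]
This is the variance of the random variable taking value $v_k$ with probability $p_k$. It is therefore nonnegative; equivalently, nonnegativity follows from Cauchy--Schwarz/Jensen with weights $p_k$.

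\textbf{Kernel.} Equality holds in the variance form exactly when $v_k$ is constant on the support of $\vp$. Softmax gives $p_k>0$ for every $k$, so the support is all of $\{1,\dots,V\}$. Hence $\vv^\top\mH_\vz\vv=0$ if and only if $\vv\in\mathrm{span}\{\vone\}$.

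For a PSD matrix, $\mH_\vz\vv=\bm 0$ is equivalent to $\vv^\top\mH_\vz\vv=0$, since $\vv^\top\mH_\vz\vv=\|\mH_\vz^{1/2}\vv\|^2$. This gives $\ker(\mH_\vz)=\mathrm{span}\{\vone\}$. As a direct check, $\mH_\vz\vone=\vp-\vp(\vp^\top\vone)=\vp-\vp=\bm 0$. Rank--nullity then gives $\rank(\mH_\vz)=V-1$.

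\textbf{Range.} Because $\mH_\vz$ is symmetric, $\mathrm{range}(\mH_\vz)=\ker(\mH_\vz)^\perp=\vone^\perp$.

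\textbf{The residual.} We have $\vone^\top\vg=\vone^\top\vp-\vone^\top\vy=1-1=0$, so $\vg\in\vone^\perp=\mathrm{range}(\mH_\vz)$.

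\textbf{Coordinates.} By the spectral theorem, $\mH_\vz$ has an orthonormal eigenbasis. The eigenvectors with nonzero eigenvalues span the range, which is $\vone^\perp$; the zero eigenspace is exactly $\mathrm{span}\{\vone\}$. Thus any eigenbasis of $\mH_\vz$ restricted to $\vone^\perp$ consists of $V-1$ linearly independent vectors spanning $\vone^\perp$. Since $\vg$ lies in this space, it has a unique coordinate representation, given by the inner products with the basis vectors when the basis is orthonormal.

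\textbf{Main subtlety.} The only nontrivial point is strict positivity of every $p_k$, which is what pins the kernel down to exactly $\mathrm{span}\{\vone\}$ rather than something larger. This holds because softmax of finite logits is strictly positive.
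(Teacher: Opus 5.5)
Your proof is correct and follows essentially the same route as the paper: the variance identity $\vv^\top\mH_\vz\vv=\Var_{\vp}(v)$ for positive semidefiniteness, strict positivity of the softmax to pin the kernel to $\mathrm{span}\{\vone\}$, rank--nullity, $\mathrm{range}(\mH_\vz)=\ker(\mH_\vz)^\perp$ by symmetry, and $\vone^\top\vg=0$. Your explicit remark that $\vv^\top\mH_\vz\vv=0$ forces $\mH_\vz\vv=\bm{0}$ (via $\|\mH_\vz^{1/2}\vv\|^2$) fills in a step the paper leaves implicit.
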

\begin{corollary}[Modal dynamics in the eigenbasis of $\,\mH_{\vz}^{t}$]
Under the same assumptions as Theorem~\ref{theorem:sam dynamics}.  For each $t$, let the spectral
decomposition of the symmetric positive semidefinite matrix $\mH_{\vz}^{t}$ be
\[
\mH_{\vz}^{t} \;=\; \sum_{k=1}^{V-1} \lambda_{k}^{t}\, \vv_{k}^{t} (\vv_{k}^{t})^{\top},
\]
where $\lambda_{k}^{t}> 0,\ \ (\vv_{k}^{t})^{\top}\vv_{\ell}^{t}=\delta_{k\ell}$ are the non-zero eigenvalues and eigenvectors. Define the \emph{modal coefficients} of the
residual $\vg^{t}=\vp^{t}-\vy$ by
\begin{equation}
\begin{aligned}
e_k^{t}   &\coloneqq (\vv_k^{t})^{\top}\vg^{t},\quad
e_k^{t+1} &\coloneqq (\vv_k^{t})^{\top}\vg^{t+1},\qquad k=1,\ldots,V-1.
\end{aligned}
\end{equation}
Then there exists a constant $C>0$ such that
for all nonzero modes $k\ge 1$,
\begin{equation}\label{eq:modal}
e_{k}^{t+1}
= \Big( 1 \;-\; \eta\,\mu\,\big[\lambda_{k}^{t} + \underbrace{\tilde\rho^{\,t}(\lambda_{k}^{t})^{2}}_{\text{SAM's correction}}\big] \Big)\, e_{k}^{t}
\;+\; r_{k}^{t},
\qquad |r_{k}^{t}| \;\le\; C\,\eta^{2}.
\end{equation}
\end{corollary}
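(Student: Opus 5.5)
The corollary follows by projecting the residual recursion of Theorem~\ref{theorem:sam dynamics} onto the eigenvectors of $\mH_{\vz}^{t}$.

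\textbf{Step 1: project the residual expansion.} From the residual expansion of Theorem~\ref{theorem:sam dynamics} we have
\[
\vg^{t+1}=\Big(\mI-\eta\mu\mH_{\vz}^{t}-\eta\mu\tilde\rho^{\,t}(\mH_{\vz}^{t})^{2}\Big)\vg^{t}+\vr_{\vg}^{t},\qquad \|\vr_{\vg}^{t}\|\le C\eta^{2}.
\]
Fix a mode $k\in\{1,\dots,V-1\}$ and left-multiply both sides by $(\vv_k^{t})^\top$. By definition, the left side becomes $e_k^{t+1}$. Note that the same basis $\vv_k^{t}$ is used at time $t+1$, which is exactly how the statement defines $e_k^{t+1}$, so no eigenvector perturbation argument is needed.

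\textbf{Step 2: evaluate the matrix terms.} Since $\mH_{\vz}^{t}$ is symmetric, $(\vv_k^{t})^\top\mH_{\vz}^{t}=\lambda_k^{t}(\vv_k^{t})^\top$ and $(\vv_k^{t})^\top(\mH_{\vz}^{t})^{2}=(\lambda_k^{t})^{2}(\vv_k^{t})^\top$. Hence the projected main term is
\[
\big(1-\eta\mu[\lambda_k^{t}+\tilde\rho^{\,t}(\lambda_k^{t})^{2}]\big)\,(\vv_k^{t})^\top\vg^{t}=\big(1-\eta\mu[\lambda_k^{t}+\tilde\rho^{\,t}(\lambda_k^{t})^{2}]\big)\,e_k^{t}.
\]
This step does not even require the expansion of $\vg^{t}$ in the eigenbasis. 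The preceding proposition ($\vg^{t}\in\vone^\perp$) is only needed to justify that the $e_k^{t}$ are the complete coordinates of $\vg^{t}$: the component along $\vone/\sqrt V$ (eigenvalue $0$) vanishes, so restricting to the $V-1$ nonzero modes loses nothing.

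\textbf{Step 3: bound the remainder.} Set $r_k^{t}:=(\vv_k^{t})^\top\vr_{\vg}^{t}$. By Cauchy--Schwarz and $\|\vv_k^{t}\|=1$,
\[
|r_k^{t}|\le\|\vr_{\vg}^{t}\|\le C\eta^{2},
\]
with the same constant $C$ as in the theorem, uniform in $k$ and $t$.

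There is no real obstacle: the only care needed is to use the fixed basis at time $t$ for both $e_k^{t}$ and $e_k^{t+1}$, and to invoke the symmetry of $\mH_{\vz}^{t}$ so that left eigenvectors coincide with right eigenvectors. All substantive difficulty (the uniform $O(\eta^2)$ remainder) is inherited from Theorem~\ref{theorem:sam dynamics}.
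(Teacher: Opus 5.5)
Your proposal is correct and follows the paper's proof essentially step for step. Like the paper, you project the residual expansion of Theorem~\ref{app:dynamics} onto the fixed eigenvector $\vv_k^{t}$, use the eigen-relations for $\mH_{\vz}^{t}$ and $(\mH_{\vz}^{t})^{2}$ (valid because $\mH_{\vz}^{t}$ is symmetric), and bound $r_k^{t}=(\vv_k^{t})^\top\vr_{\vg}^{t}$ by $\|\vr_{\vg}^{t}\|\le C\eta^{2}$ using $\|\vv_k^{t}\|=1$. Your remark that the zero mode drops out because $\vg^{t}\perp\vone$ is also the paper's.
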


Proofs are deferred to Appendix~\ref{app:proof}. The corollary diagonalizes the vector dynamics into coordinate-wise scalars in the eigenbasis of \(\mH_\vz\), making SAM’s effect transparent. We now characterize the additional SAM correction in two regimes.

\begin{itemize}
    \item \textbf{Case 1: Positive $\eta$} (corresponding to the $\vy^+$ objective in DPO).
    In this case, GD induces a stronger contraction of the residual $\vg$ along
    high-curvature directions, i.e., those associated with large eigenvalues of
    $\mH_\vz$. The additional correction term introduced by SAM has the same sign
    as that of GD, thereby amplifying this contraction.

    \item \textbf{Case 2: Negative $\eta$} (corresponding to the $\vy^-$ objective in DPO).
    Here, GD causes the residual $\vg$ to expand more rapidly along high-curvature
    directions. Moreover, standard SAM with positive $\rho$ exacerbates this
    phenomenon, leading to even faster expansion compared to GD. By contrast,
    choosing a negative $\rho$ counteracts this expansion.
\end{itemize}

Next, we extend our theoretical framework to the result of \citet{ren2024learning}, 
which introduced the squeezing effect. For consistency with their notation, 
we let $y \in \{y^+, y^-\}$ denote the ground–truth class index (with one–hot label $\vy=\ve_{y}$), and $y^*=\arg\max_{j\neq y} p_{j}^{t}$ be the most confident incorrect class.

To compare GD and SAM, we study the \emph{one–step confidence ratio}
\[
\alpha_i^{\mathrm{GD}} := \frac{p_i^{t+1}(\mathrm{GD})}{p_i^t},
\qquad
\alpha_i^{\mathrm{SAM}} := \frac{p_i^{t+1}(\mathrm{SAM})}{p_i^t},
\]
which measures the multiplicative change of the class probability
after one update.

\begin{lemma}[One–step confidence ratios under GD, Lemma 1 of \citet{ren2024learning}]
\label{lemma:prior}
Consider the negative learning–rate case $\eta<0$, for which the
ground–truth label is $y=y^-$. Then
\[
\alpha_{y^*}^{\mathrm{GD}} > 1,
\qquad
\alpha_{y^-}^{\mathrm{GD}} < 1.
\]
\end{lemma}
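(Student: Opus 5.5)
We compute the one-step GD update exactly in logit space and then compare softmax ratios. Under GD with the fixed feature $\phi$, the parameter update is $\mW^{t+1}=\mW^t-\eta\,\vg^t\phi^\top$. Hence the logits move by exactly
\[
\vz^{t+1}=\vz^t-\eta\,\mu\,(\vp^t-\ve_{y^-}),
\]
with no remainder, since $\vz$ is linear in $\mW$. This is the $\rho=0$ case of Theorem~\ref{theorem:sam dynamics}, and here it holds exactly.

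Writing $\Delta_j:=z_j^{t+1}-z_j^t=-\eta\mu(p_j^t-\mathbb{1}[j=y^-])$, the softmax gives
\[
\alpha_i^{\mathrm{GD}}=\frac{p_i^{t+1}}{p_i^t}=\frac{e^{\Delta_i}}{\sum_j p_j^t e^{\Delta_j}}.
\]
Since $\eta<0$, set $c:=-\eta\mu>0$. Then $\Delta_j=c\,p_j^t$ for $j\neq y^-$, and $\Delta_{y^-}=c\,(p_{y^-}^t-1)$. So every incorrect class has its logit increased, in proportion to its current probability, while the ground-truth class has its logit decreased.

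\textbf{The ground-truth class.} The denominator $\sum_j p_j^t e^{\Delta_j}$ is a convex combination of the $e^{\Delta_j}$. Its smallest term is $e^{\Delta_{y^-}}<1$, and every other term exceeds $1$. Provided $p_{y^-}^t<1$, some incorrect class carries positive weight, so the average strictly exceeds $e^{\Delta_{y^-}}$. This gives $\alpha_{y^-}^{\mathrm{GD}}<1$.

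\textbf{The most confident incorrect class.} For $y^*$ we have $\Delta_{y^*}=c\,p_{y^*}^t=\max_j\Delta_j$, because $y^*$ has the largest probability among the incorrect classes and $\Delta_{y^-}<0$. So $e^{\Delta_{y^*}}$ is the maximum of the averaged quantities. The average equals this maximum only if all weight sits on indices attaining it, and this is where the argument needs care. Since $p_{y^-}^t>0$ and $\Delta_{y^-}<\Delta_{y^*}$, the ground-truth class always carries positive weight strictly below the maximum. Therefore the denominator is strictly smaller than $e^{\Delta_{y^*}}$, and $\alpha_{y^*}^{\mathrm{GD}}>1$. Ties among incorrect classes at the maximal probability cause no trouble.

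\textbf{Main obstacle and degenerate cases.} The main point is establishing strictness in both inequalities. Strictness relies only on $p_j^t\in(0,1)$, which softmax guarantees for finite logits, and on $\phi\neq\bm 0$ so that $\mu>0$. I will state these nondegeneracy conditions explicitly. For $V\ge 2$, no further assumption is needed, and in particular no smallness of $\eta$ is required, because the logit update is exact.
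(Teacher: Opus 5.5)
Your proof is correct and takes essentially the same route as the paper. The paper cites this lemma from \citet{ren2024learning} without reproving it, but your identity $\alpha_i^{\mathrm{GD}} = e^{\Delta_i}/\sum_j p_j^t e^{\Delta_j}$ is exactly the ratio representation of Lemma~\ref{lem:ratio-factor} with $\tilde\rho^{\,t}=0$ and zero remainder, after which the conclusion follows from the signs of the logit increments as you argue. Your observations that the GD logit step is exact (so no smallness of $|\eta|$ is needed) and that strictness uses only $p_j^t\in(0,1)$ and $\mu>0$ are also sound.
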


Lemma~\ref{lemma:prior} formalizes the squeezing effect under GD with a negative learning rate: 
the probability of the most confident incorrect class increases, while that of the ground–truth class decreases. 
Within our framework, we next analyze the ratio of these two probabilities after a one–step SAM update.

\begin{corollary}[One–step confidence ratios under SAM, informal version of Corollary~\ref{app:coro prob} and Corollary~\ref{app:coro-y}] \label{coro:prob} Under the same assumptions as Theorem~\ref{theorem:sam dynamics}, set $\eta\rho>0$ and $|\rho|= \kappa\sqrt{|\eta|}$. Then, for sufficiently small step size $|\eta|$, the following inequality holds: \begin{equation}\label{eq:alpha-ineq-remainder-free} \alpha_{y^*}^{\mathrm{SAM}} \;\le\; \alpha_{y^*}^{\mathrm{GD}},
\end{equation} 
and under additional technical conditions, 
the following inequality holds: \begin{equation}\label{eq:alpha-ineq-remainder-free-2} 
\alpha_{y}^{\mathrm{SAM}} \;\ge\; \alpha_{y}^{\mathrm{GD}}. \end{equation} 

Here $y \in \{y^+, y^-\}$ denotes the ground–truth label corresponding to the positive or negative learning rate, respectively. Moreover, the inequalities are strict whenever $p_{y^*}^{t}\in(0,1)$ and $\tilde\rho^{\,t} \neq 0$. \end{corollary}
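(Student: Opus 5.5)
The plan is to compare the two updates through the logit dynamics of Theorem~\ref{theorem:sam dynamics}. Both steps start from the same $\vz^t$, so the SAM logits differ from the GD logits only by the correction term
\[
\vz^{t+1}(\mathrm{SAM})-\vz^{t+1}(\mathrm{GD}) = -\eta\mu\tilde\rho^{\,t}\mH_\vz^t\vg^t + O(\eta^2).
\]
The ratio $\alpha_i$ is a softmax ratio:
\[
\log\alpha_i = \Delta z_i - \log\sum_j p_j^t e^{\Delta z_j}.
\]
Subtracting the GD value from the SAM value therefore gives
\[
\log\alpha_i^{\mathrm{SAM}}-\log\alpha_i^{\mathrm{GD}} = \delta_i - \sum_j p_j^t\delta_j + O(\eta^2),
\qquad \delta \coloneqq -\eta\mu\tilde\rho^{\,t}\mH_\vz^t\vg^t .
\]
This uses a first-order expansion of the log-partition term around the GD step; the cross terms are $O(\eta\cdot\eta\tilde\rho)$. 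With $\mH_\vz=\Diag(\vp)-\vp\vp^\top$ we have $\vp^\top\mH_\vz=0$, so $\sum_j p_j\delta_j=0$. The comparison then reduces to the sign of the single coordinate $\delta_i = -\eta\mu\tilde\rho^{\,t}(\mH_\vz\vg)_i$.

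Next we compute $\mH_\vz\vg$ explicitly, with $\vg=\vp-\ve_y$:
\[
(\mH_\vz\vg)_i = p_i(g_i-\vp^\top\vg),\qquad \vp^\top\vg=\|\vp\|^2-p_y .
\]
For $i=y^*\neq y$ this gives $p_{y^*}(p_{y^*}-\|\vp\|^2+p_y)$. Since $p_{y^*}\ge p_j$ for all $j\neq y$,
\[
\|\vp\|^2 = p_y^2+\sum_{j\neq y}p_j^2 \le p_y^2 + p_{y^*}(1-p_y).
\]
Hence the bracket is at least $p_{y^*}p_y+p_y-p_y^2\ge 0$, and it is strictly positive when $p_{y^*}\in(0,1)$; the borderline $p_y=0$ needs a separate check. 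Because $\eta\rho>0$ and $\tilde\rho^{\,t}$ has the sign of $\rho$, we get $\eta\tilde\rho^{\,t}>0$, so $\delta_{y^*}\le 0$. This gives $\alpha_{y^*}^{\mathrm{SAM}}\le\alpha_{y^*}^{\mathrm{GD}}$ at leading order.

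For $i=y$ we get $(\mH_\vz\vg)_y = p_y(p_y-1-\|\vp\|^2+p_y) = -p_y(1-p_y)^2 - p_y(\|\vp\|^2-p_y^2)$, which is $\le 0$. So $\delta_y\ge 0$, giving $\alpha_y^{\mathrm{SAM}}\ge\alpha_y^{\mathrm{GD}}$ at leading order. The additional technical conditions will be the lower bounds on $p_y$ needed for this leading term to dominate the remainder.

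The main obstacle is making the leading term dominate the $O(\eta^2)$ remainders, and this is where the scaling $|\rho|=\kappa\sqrt{|\eta|}$ enters. We have $\tilde\rho^{\,t}=\rho\sqrt\mu/\|\vg^t\|$, so $|\delta_i|$ is of order $|\eta|^{3/2}\kappa\,\cdot$ (a positive quantity depending on $\vp^t$). We must check that the constant $C$ in Theorem~\ref{theorem:sam dynamics} does not itself grow with $\rho$. Under the stated scaling, the SAM remainder is $O(\eta^2+\eta\rho^2)=O(\eta^2)$, and the extra $\eta\rho^2$ piece is absorbed because $\rho^2\propto|\eta|$. The comparison then holds once $|\eta|^{1/2}$ is below the ratio of the leading coefficient to $C$.

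Before concluding, I will verify in Appendix~\ref{app:dynamics} that the remainders are uniform under these assumptions, i.e., bounded $\|\phi\|$, $\|\vg^t\|$ bounded away from zero, and the relevant probabilities bounded away from $0$. Strictness then follows from the strict positivity of the leading coefficient when $p_{y^*}^t\in(0,1)$ and $\tilde\rho^{\,t}\neq0$.
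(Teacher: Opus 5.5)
The reduction to a single coordinate fails, because $\vp^\top\mH_\vz$ is not zero. From $\mH_\vz=\Diag(\vp)-\vp\vp^\top$, what vanishes is $\vone^\top\mH_\vz$ (the kernel is $\mathrm{span}\{\vone\}$). By contrast, $\vp^\top\mH_\vz\vg=\sum_j p_j^2(g_j-\vp^\top\vg)$ is the $\vp$-covariance of $\vp$ and $\vg$, which is generally nonzero and can have either sign.

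Your log-ratio expansion itself is fine, but its leading term is $-\eta\mu\tilde\rho^{\,t}\big[(\mH_\vz\vg)_i-\vp^\top\mH_\vz\vg\big]$, not $\delta_i$. The sign facts you prove, $(\mH_\vz\vg)_{y^*}\ge 0$ and $(\mH_\vz\vg)_y\le 0$, do not decide this. Two examples:
\begin{itemize}
\item With $(p_y,p_{y^*},p_3)=(0.01,0.9,0.09)$, one gets $\vp^\top\mH_\vz\vg\approx 0.068$ against $(\mH_\vz\vg)_{y^*}\approx 0.083$.
\item With $p_y=0.9$ and the remaining mass split evenly over ten classes, $\vp^\top\mH_\vz\vg\approx-0.0088$ against $(\mH_\vz\vg)_y=-0.0099$.
\end{itemize}
So the comparison must be with the $\vp$-weighted mean of $\mH_\vz\vg$, not with $0$.

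For $y^*$, the missing idea is the one the paper uses: $y^*$ maximizes $\mH_\vz\vg$ over all coordinates ($\Delta^t_{j,y^*}\le 0$ for all $j$), hence dominates any weighted average. Your bound already gives $C^t=\|\vp\|^2-p_y\le p_{y^*}$. For $j\ne y$ this yields $(\mH_\vz\vg)_{y^*}-(\mH_\vz\vg)_j=(p_{y^*}-p_j)(p_{y^*}+p_j-C^t)\ge 0$, and in addition $(\mH_\vz\vg)_y\le 0\le(\mH_\vz\vg)_{y^*}$. In the paper, the weights $w_j^{(y^*)}\approx p_j$ in $\sum_j w_j^{(y^*)}(-\Delta^t_{j,y^*})$ carry exactly the centering you dropped.

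For $y$ there is no such ordering, which is why the paper adds the top-2 dominance assumption and the top-2 reduction lemma. The "additional technical conditions" are therefore not lower bounds on $p_y$ for remainder control. Within your centered expansion you could instead prove directly that
\[
\vp^\top\mH_\vz\vg-(\mH_\vz\vg)_y=p_y(1-p_y)^3+2p_y(1-p_y)\sigma_2+(\sigma_3-\sigma_2^2),\qquad \sigma_k:=\sum_{j\ne y}p_j^k .
\]
This is at least $p_y(1-p_y)^3$, because $\sigma_2^2\le(1-p_y)\sigma_3$ by Cauchy--Schwarz, and it would even remove the need for the top-2 assumption. As written, though, both of your sign conclusions rest on the false identity $\vp^\top\mH_\vz=0$.
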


The proof is deferred to Appendix~\ref{app:proof}. Corollary~\ref{coro:prob} and Lemma~\ref{lemma:prior} together imply that, when $\eta<0$, using SAM with a negative $\rho<0$ moderates the growth of the most confident incorrect class and slows the decay of the ground-truth class, thereby preventing excessive expansion and premature collapse. 
Thus, for negative $\eta$, choosing a \emph{negative} $\rho$ effectively alleviates the squeezing effect.

\begin{figure}
  \centering
  \includegraphics[width=1\linewidth]{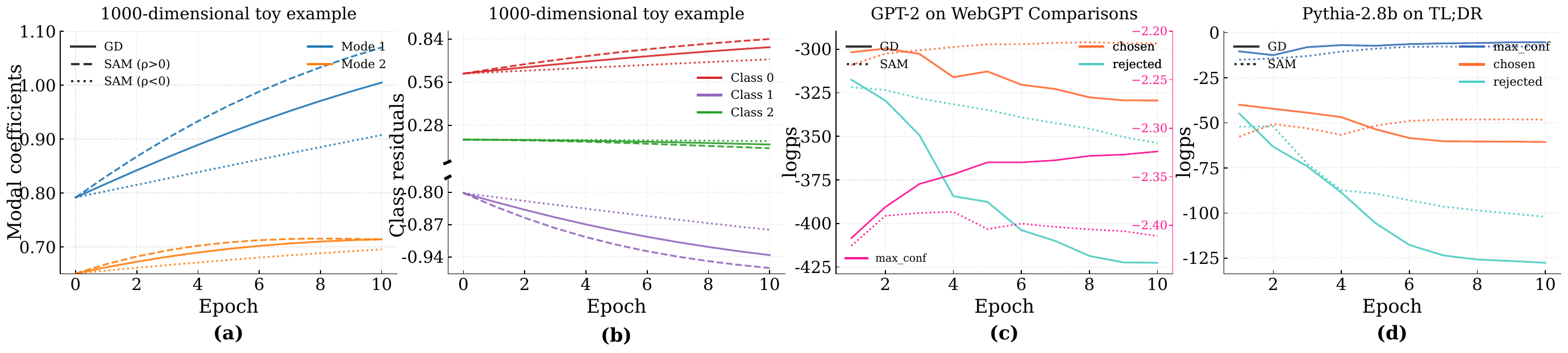}
  \vspace{-8mm}
\caption{Training dynamics under different settings. 
(a--b) 1000-dimensional toy example with three classes, trained with a negative learning rate under GD, SAM ($\rho>0$), and SAM ($\rho<0$). 
Panel (a) shows the modal coefficients, and panel (b) shows the class residuals. 
(c) Real-data experiment on WebGPT Comparisons with GPT-2, comparing GD and SAM: the panel reports the log-probabilities of the chosen responses, the rejected responses, and \texttt{max\_conf}, which denotes the model's most confident response. 
(d) Real-data experiment on the TL;DR dataset with Pythia-2.8B, showing the same three curves (chosen, rejected, and \texttt{max\_conf}).}
  \label{fig:toy_dynamics}
\end{figure}

We empirically validate this prediction using a 1000-dimensional toy example with three classes. We first train for 10 epochs using class 0 as the label to mimic SFT, and then switch to class 1 while continuing training with a negative learning rate. As shown in Figure~\ref{fig:toy_dynamics}, this setup reproduces the squeezing effect observed in prior work \citep{ren2024learning}: both modal coefficients expand rapidly, the probabilities of class 1 and class 2 decrease, and only the probability of class 0 increases. Moreover, SAM with positive $\rho$ exacerbates this behavior, whereas SAM with negative $\rho$ suppresses it, exactly as predicted by our theory.

Additionally, Corollary~\ref{coro:prob} indicates that for $\eta>0$, using SAM with $\rho>0$ also has a beneficial effect on alleviating the squeezing effect: the contraction of $y^*$ is accelerated, while the growth of the ground-truth $y^+$ is enhanced. Taken together, these results suggest a simple rule: choosing $\rho$ with the \emph{same sign} as the learning rate alleviates the squeezing effect.

To verify this rule in real models, we track the probability dynamics of chosen, rejected, and most confident responses when fine-tuning GPT-2 \citep{radford2019language} on the WebGPT Comparisons dataset \citep{nakano2022webgptbrowserassistedquestionansweringhuman} (Figure~\ref{fig:toy_dynamics}c) and Pythia-2.8B \citep{biderman2023pythia} on the TL;DR dataset \citep{stienon2020learning} (Figure~\ref{fig:toy_dynamics}d). 
In both cases, SAM increases the probability of chosen responses, slows the decrease of rejected responses, and prevents the most confident responses from growing, consistent with our theoretical predictions.

\subsection{From theory to practice}
In practice, an important challenge in applying SAM to DPO is that it requires an additional forward and backward pass, thereby nearly doubling the computational cost. However, our dynamical analysis shows that curvature regularization can still be achieved 
even when the perturbation is applied solely in the logit space (with an appropriate choice of the sign of $\rho$), which also alleviates the squeezing effect. 
Motivated by this observation, we suggest using a computationally efficient SAM variant that perturbs only in the last layer, called \textit{logits-SAM}, 
to improve the effectiveness and robustness of DPO. 
Its objective can be formulated as follows:
\begin{equation*}
\mathcal{L}^{\text{logits-SAM}}_{\mathrm{DPO}}(\mW,\vtheta;\vx,\vy^{+},\vy^{-})
=
\mathcal{L}_{\mathrm{DPO}}\!\left(
\mW\!+\!\rho
\frac{\nabla_{\mW}\mathcal{L}_{\mathrm{DPO}}(\mW,\vtheta;\vx,\vy^{+},\vy^{-})}
     {\big\|\nabla_{\mW}\mathcal{L}_{\mathrm{DPO}}(\mW,\vtheta;\vx,\vy^{+},\vy^{-})\big\|},
\,\vtheta;\vx,\vy^{+},\vy^{-}
\right).
\end{equation*}
where $\mW$ denotes the parameters in the output layer, and $\vtheta$ denotes the parameters except $\mW$. 

\paragraph{Implementation.}
Unlike our theoretical setting, common DPO implementations\footnote{\url{https://github.com/eric-mitchell/direct-preference-optimization}}\footnote{\url{https://github.com/huggingface/trl}} 
typically encode the $\vy^-$ objective as negative while using a single positive learning rate, rather than assigning positive and negative rates to $\vy^+$ and $\vy^-$, respectively. 
Accordingly, we adopt this convention in our logits-SAM implementation. 
Our dynamical analysis further indicates that $\rho$ should share the sign of the learning rate; hence we consistently use a positive $\rho$. In Appendix~\ref{app:equivalence}, we provide the full update formulas and a derivation establishing the equivalence between the theoretical and practical settings, summarized in Table~\ref{tab:theory-vs-practice}.

\noindent\textit{Remark.}
This choice does not render our analysis of the negative learning rate redundant. 
For first-order methods such as GD, using a negative objective with a positive learning rate is equivalent to using a positive objective with a negative learning rate. 
Therefore, our analysis applies fully to the case of negative objectives.

The implementation pseudocode can be found in Algorithm~\ref{pseudocode} of Appendix~\ref{implementation}. 
We compute the perturbation manually using the hidden states from the penultimate layer 
and the parameters of the final layer, requiring only a single full forward–backward pass 
instead of the two full passes required in standard SAM. 
Since the parameters of the final layer typically constitute only a small fraction of all trainable parameters 
(e.g., 4.64\% in Pythia-2.8B and 1.81\% in Mistral-7B), 
the additional training overhead introduced by logits-SAM is negligible. A detailed comparison of wall-clock time and peak memory usage is provided in Section~\ref{sec:add exp}.

\section{Experiments}
\subsection{Experimental setup}
\paragraph{Datasets.} We conduct DPO training on three widely used datasets to evaluate our algorithm: Anthropic-HH \citep{bai2022training}, the Reddit TL;DR summarization dataset \citep{stienon2020learning}, and the UltraFeedback Binarized dataset \citep{cui2023ultrafeedback}.
\paragraph{Models.} Following common practice, we adopt SFT models as our base models. 
We use Pythia-2.8B \citep{biderman2023pythia} for experiments on Anthropic-HH and Reddit TL;DR, and Mistral-7B-v0.1 \citep{jiang2023mistral7b} for UltraFeedback. 
For Pythia-2.8B, we initialize from the Hugging Face open-source checkpoint\footnote{\url{https://huggingface.co/lomahony/eleuther-pythia2.8b-hh-sft}}, 
which was SFT for one epoch on Anthropic-HH. 
For the TL;DR experiments, we use the checkpoint\footnote{\url{https://huggingface.co/trl-lib/pythia-2.8b-deduped-tldr-sft}}, 
which was SFT for one epoch on Reddit TL;DR. 
For Mistral-7B-v0.1, we use the Alignment Handbook \citep{Tunstall_The_Alignment_Handbook} checkpoint Zephyr-7b\footnote{\url{https://huggingface.co/alignment-handbook/zephyr-7b-sft-full}} \citep{tunstall2023zephyrdirectdistillationlm}, 
which was SFT for one epoch on UltraChat-200k \citep{ding2023enhancing}.

\paragraph{Evaluation.} For Pythia-2.8B, we evaluate model performance on Anthropic-HH and Reddit TL;DR by measuring win rates (WR) against both the SFT baseline and the human-preferred responses, using GPT-5-mini (version 2025-08-07) as the automatic judge. Following the DPO paper, we set the decoding temperature to 1 for HH and 0 for TL;DR. For Mistral-7B-v0.1, we conduct evaluation on three popular open-ended instruction-following benchmarks: AlpacaEval 2 \citep{dubois2024length}, Arena-Hard v0.1 \citep{li2024crowdsourceddatahighqualitybenchmarks}, and MT-Bench \citep{zheng2023judgingllmasajudgemtbenchchatbot}. Details of each benchmark can be found in Appendix~\ref{app:exp}. We adopt the default generation parameters provided by each benchmark. Specifically, we report both length-controlled win rates (LC) and raw WR for AlpacaEval 2, model WR for Arena-Hard v0.1, and averaged judge scores (1--10) for MT-Bench, all following the standard evaluation protocols.

\paragraph{Baselines.} We apply logits-SAM to DPO and two SOTA variants, SLiC-HF \citep{zhao2023slichfsequencelikelihoodcalibration} and CPO \citep{xu2024contrastivepreferenceoptimizationpushing}. We use AdamW optimizer \citep{loshchilov2019decoupledweightdecayregularization} in all experiments. For Pythia-2.8B, we set batch size 64 and learning rate $1\times 10^{-6}$, following the DPO paper; for Mistral-7B, we use batch size 128 and learning rate $5\times 10^{-7}$, following the Alignment Handbook’s recommended settings.

\paragraph{Hyperparameters.}
For DPO, we adopt the recommended \(\beta\) values from the DPO paper and the Alignment Handbook, which are widely used and well tuned.
For SLiC-HF and CPO, we select hyperparameters following the tuning protocol from \cite{meng2024simposimplepreferenceoptimization}.
For logits-SAM, we keep all hyperparameters identical to each corresponding baseline to ensure fairness; the only additional hyperparameter is \(\rho\), which we tune over \(\{1\times 10^{-5}, 1\times 10^{-4}, 1\times 10^{-3}\}\).
Full hyperparameter settings are provided in Table~\ref{tab:slic-cpo} and Table~\ref{tab:rho-selection} of Appendix~\ref{app:exp}.

\subsection{Experimental results}
\paragraph{Performance of summarization and dialogue generation tasks.}
We present the results in Table~\ref{tab:hh-tldr}.
We find that logits-SAM consistently improves performance across both HH and TL;DR datasets.
All three baselines (DPO, SLiC-HF, and CPO) achieve higher win rates against both SFT and chosen responses when augmented with logits-SAM.
Notably, SLiC-HF shows the largest gains on HH (+6.60 pp vs SFT, +7.49 pp vs chosen), while CPO achieves strong improvements on TL;DR (+2.30 pp vs SFT, +6.03 pp vs chosen), demonstrating that logits-SAM provides stable and generalizable benefits across different optimization methods.
\begin{table}[t]
\caption{Evaluation results (WR \%) on HH and TL;DR datasets using Pythia-2.8B. The judge is  GPT-5-mini. The highest value within each method group (baseline vs. logits-SAM) is \textbf{bolded}.}
\vspace{1mm}
\centering
\begin{tabular}{lcccc}
\hline
\multirow{2}{*}{Method} & \multicolumn{2}{c}{HH} & \multicolumn{2}{c}{TL;DR} \\
 & vs SFT & vs chosen & vs SFT & vs chosen \\
\hline
DPO              & 70.52 & 56.35 & 84.21 & 34.78 \\
DPO+logits-SAM   & \textbf{72.28} & \textbf{60.51} & \textbf{89.58} & \textbf{36.57} \\
\hline
SLiC-HF          & 65.27 & 54.72 & 91.88 & 31.36 \\
SLiC-HF+logits-SAM & \textbf{71.87} & \textbf{62.21} & \textbf{94.40} & \textbf{32.80} \\
\hline
CPO              & 66.60 & 58.19 & 90.99 & 39.38 \\
CPO+logits-SAM   & \textbf{70.24} & \textbf{59.90} & \textbf{93.29} & \textbf{45.41} \\
\hline
\end{tabular}
\vspace{-4mm}
\label{tab:hh-tldr}
\end{table}

\begin{table}[h]
\centering
\caption{Evaluation results on AlpacaEval 2 (LC and WR), Arena-Hard v0.1 (WR), and MT-Bench using Mistral-7B-v0.1. 
Judges are GPT-4 Turbo for AlpacaEval 2, and GPT-4.1 for Arena-Hard v0.1 and MT-Bench. 
The highest value within each method group (baseline vs. logits-SAM) is \textbf{bolded}.}
\vspace{1mm}
\begin{tabular}{lcccc}
\hline
Method & \multicolumn{2}{c}{AlpacaEval 2} & Arena-Hard v0.1 & MT-Bench \\
       & LC (\%) & WR (\%) & WR (\%) & (score) \\
\hline
DPO              & 13.08 & 10.96 & 19.0 & 5.49 \\
DPO+logits-SAM   & \textbf{13.90} & \textbf{11.62} & \textbf{23.1} & \textbf{5.79} \\
\hline
SLiC-HF          &  8.92 &  8.97 & 19.1 & 5.05 \\
SLiC-HF+logits-SAM & \textbf{10.63} & \textbf{9.23} & \textbf{21.1} & \textbf{5.22} \\
\hline
CPO              &  8.97 &  8.13 & 19.2 & 5.22 \\
CPO+logits-SAM   & \textbf{13.32} & \textbf{11.78} & \textbf{21.4} & \textbf{5.49} \\
\hline
\end{tabular}
\label{tab:open-ended}
\end{table}
\paragraph{Performance on open-ended instruction-following benchmarks.}
We present the results in Table~\ref{tab:open-ended}. The results demonstrate that combining logits-SAM with different DPO variants consistently yields performance gains across all benchmarks. On open-ended instruction-following evaluations, logits-SAM improves both length-controlled and original win rates on AlpacaEval~2 (e.g., with CPO: +4.35~pp LC, +3.65~pp WR), increases head-to-head win rate on Arena-Hard~v0.1 (e.g., with DPO: +4.1~pp WR), and provides steady gains on MT-Bench (e.g., DPO: +0.30, SLiC-HF: +0.17, CPO: +0.27). These findings indicate that logits-SAM is a generally effective and robust enhancement across diverse evaluation settings.
\subsection{Additional analysis}\label{sec:add exp}
\begin{wrapfigure}{r}{0.4\linewidth}
\vspace{-4mm} %
  \centering
  \includegraphics[width=0.8\linewidth]{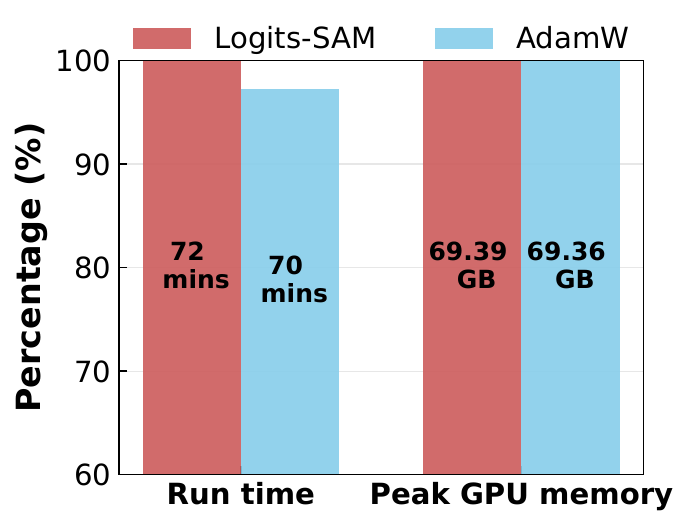}
  \vspace{-4mm}
  \caption{Efficiency comparison.}
  \vspace{-4mm}
  
  \label{fig:efficiency}
\end{wrapfigure}

\paragraph{Computational overhead.} Compared to vanilla SAM, logits-SAM minimizes additional computational overhead. We report wall-clock training time and peak memory on Pythia-2.8B trained on the Reddit TL;DR dataset (Figure~\ref{fig:efficiency}), using data-parallel training (DDP) across two NVIDIA A100 GPUs with a per-device batch size of~4. The results show that logits-SAM adds only \(\sim\!2\text{--}3\%\) extra time, with negligible peak-memory overhead. By contrast, vanilla SAM is practically infeasible for Pythia-2.8B on A100s with DDP: it nearly doubles the step time (due to an extra full forward–backward pass) and requires a perturbation buffer comparable to the model size (for billion-parameter models, this entails more than 10\,GB of additional GPU memory), which leads to out-of-memory even with batch size~1. These observations highlight the clear computational cost advantage of logits-SAM.

\paragraph{Sensitivity analysis.}
We present a sensitivity analysis of the additional hyperparameter $\rho$ for logits\mbox{-}SAM in Table~\ref{tab:sensitivity}. 
The results indicate that, within a reasonable range of $\rho$, performance is typically improved, 
whereas further enlarging $\rho$ leads to a marked degradation. 
Notably, unlike original SAM, logits\mbox{-}SAM perturbs only the output layer, so the appropriate scale of $\rho$ is much smaller than the range (0.01--0.5) recommended in the SAM paper. 
We recommend starting the search for logits\mbox{-}SAM's $\rho$ at $10^{-5}$ or $10^{-4}$ and, if resources permit, performing a finer sweep in this neighborhood.
\begin{table}[h]
\caption{Performance on HH and TL;DR datasets under different $\rho$ values. Each entry reports win rate vs SFT (left) and vs chosen (right).}
\centering
\begin{tabular}{lccccc}
\hline
Dataset & $\rho=0\ (\text{AdamW})$ & $\rho=10^{-5}$ & $\rho=10^{-4}$ & $\rho=10^{-3}$ & $\rho=10^{-2}$ \\
\hline
HH    & 70.52 / 56.35 & 69.47 / 58.27 & 72.28 / 60.51 & 68.49 / 59.52 & 65.49 / 56.31 \\
TL;DR & 84.21 / 34.78 & 87.79 / 33.97 & 89.58 / 36.57 & 84.25 / 29.93 & 81.56 / 29.31 \\
\hline
\end{tabular}
\label{tab:sensitivity}
\end{table}

\paragraph{Learning dynamics.}
In Figure~\ref{fig:dynamics}, we compare the learning dynamics of AdamW and logits-SAM when training Mistral-7B on the UltraFeedback dataset. The figure reports training loss, evaluation loss, and evaluation accuracy across training steps, and includes curves for logits-SAM under multiple choices of $\rho$ ($1\times 10^{-5}$, $1\times 10^{-4}$, $1\times 10^{-3}$). We observe that for all three values of $\rho$, logits-SAM achieves training loss that is fairly close to that of AdamW, yet consistently attains lower evaluation loss and higher evaluation accuracy. This indicates that logits-SAM provides better generalization than AdamW, and the fact that a range of $\rho$ values yields consistent improvements suggests that its benefits are robust to the choice of this hyperparameter.

\paragraph{Sharpness.} To further probe the reasons underlying the generalization gains of logits\mbox{-}SAM, we measure the traces of the parameter Hessian and the logit Hessian at the final checkpoint of Mistral\mbox{-}7B.
For AdamW, the traces are \(1.337\times 10^{4}\) / \(2.732\times 10^{2}\) (parameter / logit Hessian), while for logits-SAM they are reduced to \(1.186\times 10^{4}\) / \(2.586\times 10^{2}\). This reduction indicates that logits-SAM converges to a flatter solution, which is widely believed to be beneficial for generalization.

\begin{figure}
  \centering
  \includegraphics[width=1\linewidth]{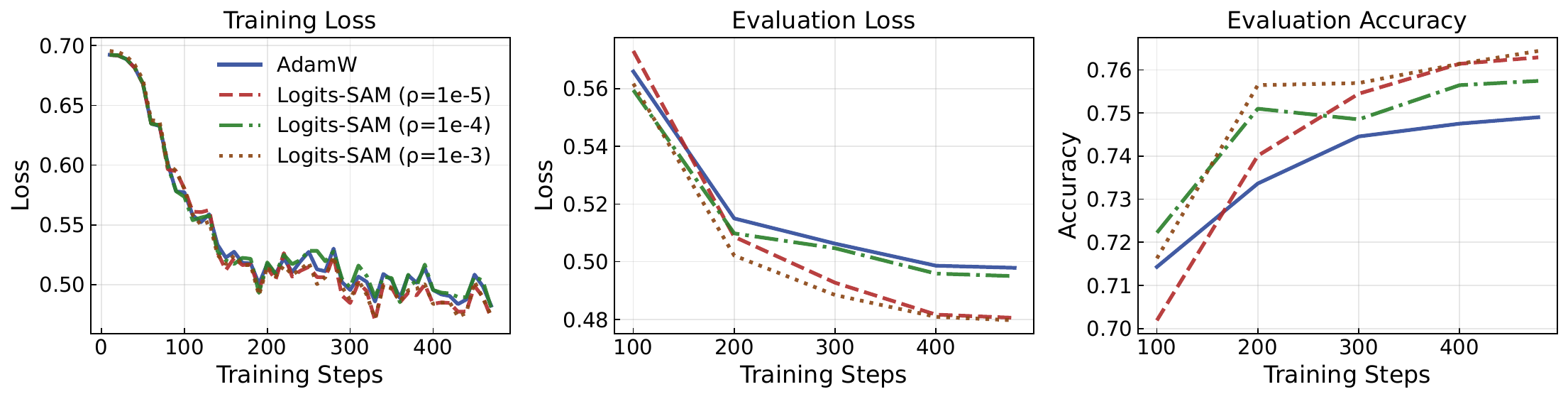}
  \vspace{-8mm}
    \caption{Learning dynamics of Mistral-7B on UltraFeedback. We compare AdamW and logits-SAM in terms of training loss, evaluation loss, and evaluation accuracy, and report curves for logits-SAM under different values of $\rho$.}
  \label{fig:dynamics}
\end{figure}
\paragraph{Extension to AI safety and on-policy setting.} 
\cite{razin2024unintentional} refer to the squeezing effect as \textit{likelihood displacement} and find that, in AI safety scenarios, it can reduce the model's harmful-response refusal rate, which leads to severe safety concerns. We evaluate the performance of logits-SAM in the same on-policy setting and AI safety scenario as in their work. The reference model is the instruction-tuned Gemma-2B-IT \citep{team2024gemma}, and the evaluation is conducted using SorryBench \citep{xie2024sorry}. We train for one epoch with a learning rate of $1 \times 10^{-6}$ and a batch size of 16. We compare the performance of the reference model, DPO, DPO with logits-SAM, CHES \citep{razin2024unintentional}, and CHES with logits-SAM. For CHES, we filter 50\% of samples using the CHES score. Performance is measured by the harmful-response refusal rate and is reported in Table~\ref{tab:refusal}. The results show that logits-SAM significantly improves performance in this setting. In particular, DPO with logits-SAM avoids the degradation in refusal rate and performs better than the reference model. Combining logits-SAM with the CHES method of \cite{razin2024unintentional} further increases the refusal rate, with an absolute improvement of approximately 9\% on both the training and test sets. These findings indicate that logits-SAM can be effectively transferred to other settings and tasks.

\begin{table}[h]
\caption{Train and test refusal rates for different methods on SorryBench (higher is better).}
\vspace{0.5mm}
\centering
\begin{tabular}{lccccc}
\hline
 & Ref model & DPO & DPO+logits-SAM & CHES & CHES+logits-SAM \\
\hline
Train Refusal & 0.8054 & 0.7703 & \textbf{0.8135} & 0.8459 & \textbf{0.9324} \\
Test Refusal  & 0.7231 & 0.7077 & \textbf{0.7538} & 0.7846 & \textbf{0.8769} \\
\hline
\end{tabular}
\label{tab:refusal}
\end{table}

\section{Related work}
\paragraph{Reinforcement learning from human feedback.}
RLHF has emerged as the de facto post-training recipe for aligning large language models \citep{christiano2017deep,ziegler2019fine,ouyang2022training,bai2022training}, typically combining supervised fine-tuning \citep{zhou2023lima,taori2023stanford,conover2023free,wang2023openchat}, reward modeling \citep{gao2023scaling,luo2023wizardmath,lambert2024rewardbench}, and policy optimization \citep{schulman2017proximal,anthony2017thinking}. 
To reduce the complexity and instability of online preference optimization, offline methods such as SLiC-HF \citep{zhao2023slichfsequencelikelihoodcalibration} and RRHF \citep{yuan2023rrhf} learn policies from comparisons using closed-form objectives. 
DPO \citep{rafailov2024directpreferenceoptimizationlanguage} is a central example that maximizes the log-probability margin between preferred and rejected responses relative to a reference policy. 
Thanks to its simplicity and training stability, DPO has rapidly gained popularity, spurring a line of variants aimed at improving performance. 
For example, \citet{azar2024general} propose IPO, a more theoretically grounded variant; 
CPO \citep{xu2024contrastivepreferenceoptimizationpushing} approximates the reference policy as uniform to eliminate the reference term; 
f\mbox{-}DPO \citep{wang2023reverseklgeneralizingdirect} generalizes DPO via a family of $f$-divergences; 
SimPO \citep{meng2024simpo} uses length-normalized scores that better reflect generation-time preferences; 
and Cal\mbox{-}DPO \citep{xiao2024cal} aligns the implicit reward scale with likelihoods.
\paragraph{Squeezing effect (likelihood displacement).} The squeezing effect \citep{ren2024learning}, also known as likelihood displacement \citep{razin2024unintentional}, refers to the recently identified phenomenon in which the probability of the ground-truth label is unintentionally reduced during DPO training. This effect has been widely observed and can lead to performance degradation, reduced safety, and even alignment failure \citep{pal2024smaug,yuan2024advancing,rafailov2024r,tajwar2024preference,pang2024iterative}. Several studies have attempted to mitigate this issue. \cite{asadi2025c2dpoconstrainedcontrolleddirect} constrain the shift of probability mass between preferred and rejected responses in the reference and target policies. \cite{liu2025cpgd} introduce a KL-divergence-based policy drift constraint to dynamically regularize policy updates. \cite{razin2024unintentional} strengthen safety alignment by filtering samples that are likely to induce likelihood displacement based on the CHES score between token embeddings. Unlike existing approaches, which either focus on designing alternative objective functions or filter the training data, our method takes a pure optimization-based perspective. It is therefore conceptually orthogonal to these techniques and can be used in combination with them.

\paragraph{Sharpness-aware minimization.} A widely held belief in the deep learning community is that flatter solutions typically generalize better \citep{hochreiter1997flat,keskar2016large,dinh2017sharp,jiang2019fantastic,xie2020diffusion,liu2023same}. Motivated by this view, SAM \citep{Foret2021} is a bilevel optimization method that explicitly seeks flatter minima, 
and it has gained popularity for delivering consistent improvements across a wide range of supervised learning tasks \citep{Foret2021,kwon2021asam,kaddour2022flat,liu2022towards,kim2022fisher,li2023enhancingsharpnessawareoptimizationvariance,nguyen2023optimal,nguyen2023flat,truong2024improving,luo2025explicit,phan2025beyond,luo2025unveiling}. Most relevant to our work are its recent applications in LLMs. 
\citet{singh2025avoiding} propose Functional-SAM for LLM pretraining and demonstrate strong performance, 
while \citet{lee2025flat} apply SAM to Proximal Policy Optimization to improve robustness in both the reward and action spaces. Logits-SAM is a byproduct mentioned in recent studies, yet it is often overlooked. 
\citet{baek2024sam} analyze the effect of label noise on SAM in linear regression and argue that Jacobian-SAM, the counterpart of logits-SAM, plays the dominant role. 
Similarly, \citet{singh2025avoiding} identify Jacobian-SAM, also referred to as Functional-SAM, as more important and show that it can effectively improve the generalization performance of LLM pretraining.


\section{Conclusion}
We analyzed the squeezing effect in DPO via coordinate-wise dynamics in parameter and logit spaces. 
Our framework shows that GD with negative $\eta$ drives residuals to expand along high-curvature directions, and that SAM suppresses this behavior via curvature regularization; in particular, negative $\eta$ calls for negative $\rho$. 
Motivated by this, we adopt \emph{logits\mbox{-}SAM}, which perturbs only the output layer and adds negligible overhead, and demonstrate consistent gains in effectiveness and robustness across models and datasets. 
We expect these insights to inform curvature-aware preference optimization going forward.

\section*{Acknowledgements}
Trung Le, Mehrtash Harandi, and Dinh Phung were supported by the ARC Discovery Project grant DP250100262. Trung Le and Mehrtash Harandi were also supported by the Air Force Office of Scientific Research under award number FA9550-23-S-0001.

\bibliography{iclr2026_conference}
\bibliographystyle{iclr2026_conference}
\clearpage

\appendix
\section{Additional related work}
\paragraph{Kernel and fixed feature regime of LLMs.} In the context of LLMs, there is a growing body of work that investigates model dynamics through the lens of kernels. A pioneering line of work by \cite{malladi2023kernel} uses Neural Tangent Kernel-based dynamics \citep{jacot2018neural} to accurately characterize the behavior of LLM fine-tuning and achieves performance comparable to fine-tuning through kernel methods, under the fixed feature assumption. \cite{afzal2024can} leverage the spectrum of the NTK to predict the generalization performance of LLM fine-tuning, and \cite{jang2024lora} study the training dynamics of low-rank adaptation from an NTK perspective.

\section{Formal theorems and proofs}\label{app:proof}
\paragraph{Notation.} We use $\|\cdot\|$ to denote the $\ell_2$ norm for vectors and the Frobenius norm for matrices.
For linear and multilinear maps, $\|\cdot\|$ denotes the corresponding operator norm
induced by these norms.

\begin{proposition}[Geometry of the logit space and the parameter-logit correspondence]\label{app:geometry}
Let $\ell:\mathbb{R}^V\to\mathbb{R}$ be $C^2$. Fix an input $\vx$ and a feature map $\phi(\vx)\in\mathbb{R}^d$. For $\mW\in\mathbb{R}^{V\times d}$ set
\[
\vz=\mW\,\phi\in\mathbb{R}^V,\qquad F(\mW)=\ell(\vz).
\]
Let $\mH_{\vz}\coloneqq\nabla^2_{\vz}\ell(\vz)$ and $\mH_{\mW}\coloneqq\nabla^2_{\mW}F(\mW)$ be the Hessians. Equip $\mathbb{R}^{V\times d}$ with the Frobenius inner product $\langle \mA,\mB\rangle_F=\mathrm{tr}\!(\mA^\top\mB)$ and $\mathbb{R}^V$ with the Euclidean inner product. 
Let
\[
T_\phi:\ \mathbb{R}^{V\times d}\to\mathbb{R}^V,\qquad T_\phi(\Delta\mW)=\Delta\mW\,\phi
\]
be the differential of the map $\mW\mapsto \mW\phi$,
and let $T_\phi^\ast:\mathbb{R}^V\to\mathbb{R}^{V\times d}$ be its adjoint with respect to these inner products, i.e.,
$\langle T_\phi(\Delta\mW),\ \vv\rangle = \langle \Delta\mW,\ T_\phi^\ast(\vv)\rangle_F$ for all $\Delta\mW,\vv$. Then $T_\phi^\ast(\vv)=\vv\,\phi^\top$.
\smallskip
\noindent The following statements hold.
\begin{enumerate}
\item \textbf{Pullback identity (operator form).}
\[
\quad \mH_{\mW}=T_\phi^\ast\,\mH_{\vz}\,T_\phi
\]
as linear operators on $\mathbb{R}^{V\times d}$. 
Consequently, if $\phi\neq \bm{0}$, then
\[
\operatorname{rank}(\mH_{\mW})=\operatorname{rank}(\mH_{\vz}).
\]
In canonical Euclidean coordinates,
\[
\nabla_{\mW}F(\mW)=\bigl(\nabla_{\vz}\ell(\vz)\bigr)\,\phi^\top,\qquad
\mH_{\mW}
=\bigl(\phi\phi^\top\bigr)\otimes\mH_{\vz}.
\]

\item \textbf{Pullback identity (bilinear form).}
For every $\Delta\mW,\Delta\mW'\in\mathbb{R}^{V\times d}$, the bilinear forms
\[
 \mH_{\mW}[\Delta\mW,\ \Delta\mW']
= \mH_{\vz}[T_\phi(\Delta\mW),\ \,T_\phi(\Delta\mW')]
\]
Thus the second–order effect of any parameter perturbation depends only on the induced logits perturbation $T_\phi(\Delta\mW)=\Delta\mW\,\phi$.

\item \textbf{Surjectivity, kernel, and quotient-space view.}
If $\phi\neq \bm{0}$, then $T_\phi$ is surjective. For any $\Delta\vz\in\mathbb{R}^V$, a minimum-Frobenius-norm preimage is
\[
\Delta\mW_\star=\frac{\Delta\vz\,\phi^\top}{\|\phi\|^2}
\quad\text{with}\quad
T_\phi(\Delta\mW_\star)=\Delta\vz.
\]
The kernel is
\[
\ker(T_\phi)=\{\ \Delta\mW\in\mathbb{R}^{V\times d}:\ \Delta\mW\,\phi=\bm{0}\ \},
\]
of dimension $V(d-1)$. Consequently, $\mH_{\mW}$ descends to the quotient
$\mathbb{R}^{V\times d}/\ker(T_\phi)\cong\mathbb{R}^V$.
\end{enumerate}
\end{proposition}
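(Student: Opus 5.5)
The proof is a chain-rule argument: $F=\ell\circ T_\phi$ with $T_\phi$ linear, so $T_\phi$ has vanishing second derivative and every second-order quantity in $\mW$ is the logit-space one pulled back through $T_\phi$.

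\textbf{Adjoint.} For any $\Delta\mW$ and $\vv$,
\[
\langle \Delta\mW\phi,\vv\rangle=\mathrm{tr}(\phi^\top\Delta\mW^\top\vv)=\mathrm{tr}(\Delta\mW^\top\vv\phi^\top)=\langle\Delta\mW,\vv\phi^\top\rangle_F ,
\]
which gives $T_\phi^\ast(\vv)=\vv\phi^\top$.

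\textbf{Parts 1 and 2 (pullback).} Expand $F$ to second order:
\[
F(\mW+\Delta\mW)=\ell(\vz+T_\phi\Delta\mW)=\ell(\vz)+\langle\nabla_\vz\ell,T_\phi\Delta\mW\rangle+\tfrac12\,\mH_\vz[T_\phi\Delta\mW,T_\phi\Delta\mW]+o(\|\Delta\mW\|^2).
\]
Because $T_\phi$ is linear, this expansion is exact up to the remainder.

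Reading off the first-order term and using the adjoint gives $\nabla_\mW F=T_\phi^\ast\nabla_\vz\ell=(\nabla_\vz\ell)\phi^\top$.

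The quadratic term gives $\mH_\mW[\Delta\mW,\Delta\mW]=\mH_\vz[T_\phi\Delta\mW,T_\phi\Delta\mW]$. Polarization, valid since both forms are symmetric, extends this to the bilinear identity of part 2. Rewriting the bilinear identity as $\langle\Delta\mW',T_\phi^\ast\mH_\vz T_\phi\Delta\mW\rangle_F$ yields the operator form $\mH_\mW=T_\phi^\ast\mH_\vz T_\phi$.

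\textbf{Coordinates.} Vectorize column-wise, so $\mathrm{vec}(\Delta\mW\phi)=(\phi^\top\otimes\mI_V)\mathrm{vec}(\Delta\mW)$. Then
\[
\mH_\mW=(\phi\otimes\mI_V)\mH_\vz(\phi^\top\otimes\mI_V)=(\phi\phi^\top)\otimes\mH_\vz
\]
by the mixed-product rule. This assumes the column-stacking convention, which is the one the stated formula implicitly uses.

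\textbf{Rank.} The Kronecker form gives $\operatorname{rank}(\mH_\mW)=\operatorname{rank}(\phi\phi^\top)\operatorname{rank}(\mH_\vz)=\operatorname{rank}(\mH_\vz)$ when $\phi\neq\bm 0$. Alternatively, from the operator form: $T_\phi$ is surjective (part 3), so $T_\phi^\ast$ is injective, and therefore $\operatorname{range}(T_\phi^\ast\mH_\vz T_\phi)=T_\phi^\ast(\operatorname{range}\mH_\vz)$ has the same dimension as $\operatorname{range}\mH_\vz$.

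\textbf{Part 3.} For surjectivity, take $\Delta\mW_\star=\Delta\vz\phi^\top/\|\phi\|^2$; then $\Delta\mW_\star\phi=\Delta\vz$.

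For minimality of the norm, note $\Delta\mW_\star=T_\phi^\ast(\cdot)$ lies in $\operatorname{range}(T_\phi^\ast)=\ker(T_\phi)^\perp$. Any other preimage equals $\Delta\mW_\star+K$ with $K\in\ker T_\phi$, and Pythagoras gives $\|\Delta\mW_\star+K\|^2=\|\Delta\mW_\star\|^2+\|K\|^2$.

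The kernel has dimension $Vd-V=V(d-1)$ by rank–nullity, since $T_\phi$ has rank $V$.

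For the quotient, part 2 shows $\mH_\mW[\Delta\mW+K,\cdot]=\mH_\mW[\Delta\mW,\cdot]$ for every $K\in\ker T_\phi$. So $\mH_\mW$ is well defined on $\mathbb R^{V\times d}/\ker T_\phi$, which $T_\phi$ identifies isomorphically with $\mathbb R^V$.

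\textbf{Main obstacle.} There is no genuine analytic difficulty. The only delicate point is the coordinate bookkeeping: fixing the vectorization convention so that the Kronecker factor ordering $(\phi\phi^\top)\otimes\mH_\vz$ comes out as stated. Everything else follows from linearity of $T_\phi$ and the $C^2$ chain rule.
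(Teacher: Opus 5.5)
Your proposal is correct and follows essentially the same argument as the paper: pull everything back through the linear map $T_\phi$ via its adjoint, then use column-stacking vectorization, surjectivity of $T_\phi$ and injectivity of $T_\phi^\ast$ for the rank, and rank--nullity for the kernel. The differences are cosmetic. You get the bilinear identity from the second-order Taylor expansion plus polarization rather than by differentiating twice and invoking Riesz; you get the Kronecker form from the Jacobian sandwich $(\phi\otimes\mI_V)\mH_{\vz}(\phi^\top\otimes\mI_V)$ rather than the operator action $\mH_{\vz}\Delta\mW\,\phi\phi^\top$; and you prove norm-minimality via $\mathrm{range}(T_\phi^\ast)=\ker(T_\phi)^\perp$ and Pythagoras rather than the paper's row-wise Cauchy--Schwarz.
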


\begin{proof}
A direct computation gives
\[
\langle T_\phi(\Delta\mW),\vv\rangle
=\mathrm{tr}\!\big((\Delta\mW\phi)^\top \vv\big)
=\mathrm{tr}\!\big(\Delta\mW^\top\,\vv\,\phi^\top\big)
=\langle \Delta\mW,\ \vv\,\phi^\top\rangle_F,
\]
hence
\[
\ T_\phi^\ast(\vv)=\vv\,\phi^\top\  .
\]

\paragraph{(1) Pullback identity (operator form).}
Let $F(\mW)=\ell(\mW\phi)$. The first differential of $F$ is
\[
\mathrm{d}F[\Delta\mW]
= \big\langle \nabla_{\vz}\ell(\vz),\ T_\phi(\Delta\mW) \big\rangle
= \big\langle T_\phi^\ast\!\big(\nabla_{\vz}\ell(\vz)\big),\ \Delta\mW \big\rangle_F,
\]
so
\[
\nabla_{\mW}F(\mW)=T_\phi^\ast\!\big(\nabla_{\vz}\ell(\vz)\big)
=\big(\nabla_{\vz}\ell(\vz)\big)\,\phi^\top .
\]
Differentiating once more and using $\mathrm{d}(\nabla_{\vz}\ell)(\vz)[\Delta\vz]
=\mH_{\vz}\,\Delta\vz$ with $\Delta\vz=T_\phi(\Delta\mW)$ yields, for all
$\Delta\mW,\Delta\mW'$,
\[
\mathrm{d}^2F[\Delta\mW,\Delta\mW']
= \big\langle T_\phi(\Delta\mW),\ \mH_{\vz}\,T_\phi(\Delta\mW')\big\rangle .
\]
By the Riesz representation on $(\mathbb{R}^{V\times d},\langle\cdot,\cdot\rangle_F)$, this means
\[
\mH_{\mW} \;=\; T_\phi^\ast\,\mH_{\vz}\,T_\phi ,
\]
as linear operators on $\mathbb{R}^{V\times d}$.

Using $T_\phi^\ast(\vv)=\vv\phi^\top$ and $T_\phi(\Delta\mW)=\Delta\mW\phi$, we obtain the explicit operator action
\[
\mH_{\mW}(\Delta\mW)
= T_\phi^\ast\!\left(\mH_{\vz}\,(\Delta\mW\phi)\right)
= \big(\mH_{\vz}\,(\Delta\mW\phi)\big)\,\phi^\top
= \mH_{\vz}\,\Delta\mW\,(\phi\phi^\top).
\]

For the rank statement, assume $\phi\neq \bm{0}$. Then $T_\phi$ is surjective and
$T_\phi^\ast$ is injective. Hence
\[
\rank(\mH_{\mW})
=\rank\!\big(T_\phi^\ast\,\mH_{\vz}\,T_\phi\big)
=\rank\!\big(\mH_{\vz}\,T_\phi\big)
=\rank(\mH_{\vz}),
\]
because $\mathrm{range}(T_\phi)=\mathbb{R}^V$.

\noindent\textbf{Kronecker (matrix) form.}
Under the canonical identification $\mathbb{R}^{V\times d}\cong\mathbb{R}^{Vd}$ obtained by stacking columns,
denote by $\mathrm{vec}(\cdot)$ the corresponding vectorization. Using the standard identity
\[
\mathrm{vec}(\mA\,\mX\,\mB)=(\mB^\top\otimes \mA)\,\mathrm{vec}(\mX),
\]
we have
\[
\mathrm{vec}\!\big(\mH_{\mW}(\Delta\mW)\big)
=\mathrm{vec}\!\big(\mH_{\vz}\,\Delta\mW\,(\phi\phi^\top)\big)
=\big((\phi\phi^\top)\otimes \mH_{\vz}\big)\,\mathrm{vec}(\Delta\mW).
\] 
Hence the matrix representation of $\mH_{\mW}$ in these
coordinates is
\[
\mH_{\mW}=(\phi\phi^\top)\otimes \mH_{\vz}.
\]

\paragraph{(2) Pullback identity (bilinear form).}
Recall that the inner products on $\mathbb{R}^{V\times d}$ and $\mathbb{R}^V$ induce bilinear forms
\[
\mH_{\mW}[\Delta\mW,\Delta\mW'] \;\coloneqq\;
\langle \Delta\mW,\ \mH_{\mW}(\Delta\mW')\rangle_F,
\qquad
\mH_{\vz}[\Delta\vz,\Delta\vz'] \;\coloneqq\;
\langle \Delta\vz,\ \mH_{\vz}\Delta\vz'\rangle .
\]
By the operator identity $\mH_{\mW}=T_\phi^\ast\,\mH_{\vz}\,T_\phi$, for all $\Delta\mW,\Delta\mW'$,
\begin{align*}
\mH_{\mW}[\Delta\mW,\Delta\mW']
&= \langle \Delta\mW,\ \mH_{\mW}(\Delta\mW')\rangle_F \\
&= \big\langle \Delta\mW,\ T_\phi^\ast\,\mH_{\vz}\,T_\phi(\Delta\mW')\big\rangle_F \\
&= \big\langle T_\phi(\Delta\mW),\ \mH_{\vz}\,T_\phi(\Delta\mW')\big\rangle \\
&= \mH_{\vz}\!\big[T_\phi(\Delta\mW),\ T_\phi(\Delta\mW')\big].
\end{align*}
Thus the second-order effect of any parameter perturbation depends only on the induced logits perturbation
$T_\phi(\Delta\mW)=\Delta\mW\,\phi$.

\paragraph{(3) Surjectivity, kernel and quotient view.}
If $\phi\neq \bm{0}$, then for any $\Delta\vz\in\mathbb{R}^V$
\[
\Delta\mW_\star \;=\; \frac{\Delta\vz\,\phi^\top}{\|\phi\|^2}
\quad\text{satisfies}\quad
T_\phi(\Delta\mW_\star)=\Delta\vz,
\]
so $T_\phi$ is surjective. The same choice minimizes the Frobenius norm among all
preimages (row-wise Cauchy–Schwarz).
The kernel is
\(
\ker(T_\phi)=\{\Delta\mW:\ \Delta\mW\,\phi=\bm{0}\},
\)
and rank–nullity gives $\dim\ker(T_\phi)=V(d-1)$. Finally, if $\Delta\mW_1-\Delta\mW_2\in\ker(T_\phi)$, then $T_\phi(\Delta\mW_1)=T_\phi(\Delta\mW_2)$ and
\begin{align*}
\mH_{\mW}[\Delta\mW_1,\Delta\mW_1]
=\big\langle T_\phi(\Delta\mW_1),\ \mH_{\vz}\,T_\phi(\Delta\mW_1)\big\rangle
&=\big\langle T_\phi(\Delta\mW_2),\ \mH_{\vz}\,T_\phi(\Delta\mW_2)\big\rangle\\
&=\mH_{\mW}[\Delta\mW_2,\Delta\mW_2],
\end{align*}
so the bilinear form descends to the quotient
$\mathbb{R}^{V\times d}/\ker(T_\phi)\cong\mathbb{R}^V$.

If $\phi=\bm{0}$ then $T_\phi\equiv 0$ and $\mH_{\mW}\equiv \bm{0}$, the degenerate case.
\end{proof}

\begin{theorem}[Dynamics of SAM]\label{app:dynamics}
Fix a sample $\vx$ and set $\mu=\|\phi\|^2>0$. Assume:
\begin{itemize}
\item
For each $\vy$, the loss $f(\cdot,\vy)$ is $C^3$ in $\vz$ and there exists a constant
$L<\infty$ such that for all $\vz$,
\[
\|\nabla_\vz^{k} f(\vz,\vy)\|\le L,\qquad k=1,2,3 .
\]
\item The step size $|\eta| \in (0,1]$ and the SAM radius satisfies $|\rho|\leq \kappa\sqrt{|\eta|}$ with a constant $\kappa \geq 0$.
\end{itemize}
Write $F(\mW)\!\coloneqq\! f(\mW\phi,\vy)$ and $\vz=\mW\phi$.

Consider \emph{standard SAM}:
\[
\Delta \mW_{\mathrm{adv}}^{\,t}
=\rho\,\frac{\nabla_{\mW} F(\mW^{t})}{\|\nabla_{\mW} F(\mW^{t})\|},
\qquad
\widetilde{\mW}^{\,t}=\mW^{t}+\Delta \mW_{\mathrm{adv}}^{\,t},
\qquad
\mW^{t+1}=\mW^{t}-\eta\,\nabla_{\mW} F(\widetilde{\mW}^{\,t}).
\]
We adopt the convention that when $\|\vg^t\|=0$, the inner perturbation
is set to $0$, i.e., $\tilde\rho^{\,t}=0$.
Otherwise, define $\tilde\rho^{\,t}\coloneqq \rho\,\sqrt{\mu}/\|\vg^t\|$.
Then, there exists a constant $C>0$ (depending only on $L,\mu,\kappa$) such that the following expansions hold with $O(\eta^2)$ remainders:
\begin{align*}
\textbf{(parameters)}\quad
\mW^{t+1}
&=
\mW^{t}-\eta\Big(\vg^{t}\,\phi^\top+\tilde\rho^{\,t}\,\mH_{\vz}^{t}\vg^{t}\,\phi^\top\Big)
+\mR_{\mW}^{t},
\qquad \|\mR_{\mW}^{t}\| \le C\,\eta^2,
\\[1mm]
\textbf{(logits)}\quad
\vz^{t+1}
&=
\vz^{t}-\eta\,\mu\Big(\vg^{t}+\tilde\rho^{\,t}\,\mH_{\vz}^{t}\vg^{t}\Big)
+\vr_{\vz}^{t},
\qquad \|\vr_{\vz}^{t}\|\le C\,\eta^2,
\\[1mm]
\textbf{(logit gradient)}\quad
\vg^{t+1}
&=
\Big(\mI-\eta\,\mu\,\mH_{\vz}^{t}-\eta\,\mu\,\tilde\rho^{\,t}\,(\mH_{\vz}^{t})^{2}\Big)\vg^{t}
+\vr_{\vg}^{t},\qquad \|\vr_{\vg}^{t}\|\le C\,\eta^2.
\end{align*}

In particular, for softmax cross-entropy where $\vg^{t}=\vp^{t}-\vy$ and
\[
\textbf{(residual)}\quad\vp^{t+1}-\vy
=
\Big(\mI-\eta\,\mu\,\mH_{\vz}^{t}-\eta\,\mu\,\tilde\rho^{\,t}\,(\mH_{\vz}^{t})^{2}\Big)(\vp^{t}-\vy)
+\vr_{\vg}^{t},\quad \|\vr_{\vg}^{t}\|\le C\,\eta^2 .
\]

\begin{proof}
By Proposition~\ref{app:geometry} (operator form),
\[
\nabla_{\mW}F(\mW)=\vg\,\phi^\top,
\qquad
\mH_{\mW}(\Delta\mW)=\mH_{\vz}\,\Delta\mW\,\bigl(\phi\phi^\top\bigr),
\]
and $T_\phi(\Delta\mW)=\Delta\mW\,\phi$ with $\|T_\phi\|\le\|\phi\|=\sqrt{\mu}$.
Moreover, by the multilinear chain rule applied to $F(\mW)=f(\mW\phi,\vy)$,
\begin{equation}\label{eq:thirdW}
\nabla_{\mW}^{3}F(\mW)[\Delta_1,\Delta_2,\Delta_3]
=
\nabla_{\vz}^{3}f(\vz,\vy)\big[T_\phi(\Delta_1),\,T_\phi(\Delta_2),\,T_\phi(\Delta_3)\big],
\end{equation}
hence the operator norm satisfies
\begin{equation}\label{eq:thirdW-bound}
\sup_{\mW}\big\|\nabla_{\mW}^{3}F(\mW)\big\|
\;\le\;
\Big(\sup_{\vz}\big\|\nabla_{\vz}^{3}f(\vz,\vy)\big\|\Big)\,\|T_\phi\|^{3}
\;\le\; L\,\mu^{3/2}.
\end{equation}

\paragraph{(1) Parameter update.}
If $\|\vg^{t}\|>0$, then $\nabla_{\mW}F(\mW^{t})=\vg^{t}\phi^\top$ and
$\|\vg^{t}\phi^\top\|=\|\vg^{t}\|\,\|\phi\|=\|\vg^{t}\|\sqrt{\mu}$, so
\[
\Delta \mW_{\mathrm{adv}}^{\,t}
=\rho\,\frac{\vg^{t}\phi^\top}{\|\vg^{t}\|\sqrt{\mu}},
\qquad
\big\|\Delta \mW_{\mathrm{adv}}^{\,t}\big\|=|\rho|\le \kappa\sqrt{|\eta|}.
\]
(If $\|\vg^{t}\|=0$, our convention sets $\Delta \mW_{\mathrm{adv}}^{\,t}=\bm{0}$.)
A second-order Taylor expansion of $\nabla_{\mW}F$ at $\mW^{t}$ gives, for some
$\theta\in(0,1)$,
\[
\nabla_{\mW}F(\widetilde{\mW}^{\,t})
=
\nabla_{\mW}F(\mW^{t})
+\mH_{\mW}^{t}\!\big[\Delta \mW_{\mathrm{adv}}^{\,t}\big]
+\tfrac12\,\nabla_{\mW}^{3}F\!\big(\mW^{t}+\theta\Delta \mW_{\mathrm{adv}}^{\,t}\big)
\big[\Delta \mW_{\mathrm{adv}}^{\,t},\Delta \mW_{\mathrm{adv}}^{\,t}\big].
\]
By \eqref{eq:thirdW-bound} and $ \|\Delta \mW_{\mathrm{adv}}^{\,t}\|\le \kappa\sqrt{|\eta|}$,
\[
\Big\|
\tfrac12\,\nabla_{\mW}^{3}F(\,\cdot\,)\big[\Delta \mW_{\mathrm{adv}}^{\,t},\Delta \mW_{\mathrm{adv}}^{\,t}\big]
\Big\|
\;\le\; \tfrac12\,L\,\mu^{3/2}\,\|\Delta \mW_{\mathrm{adv}}^{\,t}\|^{2}
\;\le\; C_0\,|\eta|,
\]
for a constant $C_0=C_0(L,\mu,\kappa)$.
Using the operator identity from Proposition~\ref{app:geometry},
\[
\mH_{\mW}^{t}\!\big[\Delta \mW_{\mathrm{adv}}^{\,t}\big]
=\mH_{\vz}^{t}\,\Delta \mW_{\mathrm{adv}}^{\,t}\big(\phi\phi^\top\big)
=\frac{\rho\sqrt{\mu}}{\|\vg^{t}\|}\,\mH_{\vz}^{t}\vg^{t}\,\phi^\top
=\tilde\rho^{\,t}\,\mH_{\vz}^{t}\vg^{t}\,\phi^\top.
\]
Therefore
\[
\mW^{t+1}
=\mW^{t}-\eta\Big(\vg^{t}\phi^\top+\tilde\rho^{\,t}\,\mH_{\vz}^{t}\vg^{t}\phi^\top\Big)
-\eta\,\mR_{\nabla}^{\,t},
\]
where $\|\mR_{\nabla}^{\,t}\|\le C_0\,|\eta|$. Setting $\mR_{\mW}^{t}\coloneqq -\eta\,\mR_{\nabla}^{\,t}$
yields $\|\mR_{\mW}^{t}\|\le C\,\eta^{2}$ with $C=C(L,\mu,\kappa)$, proving the
parameter expansion.

\paragraph{(2) Logit update.}
Right-multiplying by $\phi$ and using $\mu=\|\phi\|^2$,
\[
\vz^{t+1}-\vz^{t}
=(\mW^{t+1}-\mW^{t})\phi
=-\eta\,\mu\Big(\vg^{t}+\tilde\rho^{\,t}\,\mH_{\vz}^{t}\vg^{t}\Big)+\vr_{\vz}^{t},
\]
with
$\|\vr_{\vz}^{t}\|
\le \|\mR_{\mW}^{t}\|\,\|\phi\|
\le C\,\eta^{2}$ (absorbing $\sqrt{\mu}$ into $C$). This proves the logits expansion.

\paragraph{(3) logit gradient update.}
Since $\vg=\nabla_{\vz} f(\vz,\vy)$, a first-order Taylor expansion at $\vz^{t}$ gives
\[
\vg^{t+1}
=\vg^{t}+\mH_{\vz}^{t}(\vz^{t+1}-\vz^{t})
+\tfrac12\,\nabla_{\vz}^{3} f(\vz^{t}+\xi^{t},\vy)\big[\Delta\vz^{t},\Delta\vz^{t}\big],
\quad \Delta\vz^{t}\!=\vz^{t+1}-\vz^{t}.
\]
By assumption $\|\nabla_{\vz}^{3} f\|\le L$ and $\|\Delta\vz^{t}\|=O(\eta)$ from the
previous step, hence the remainder has norm $\le C_1\,\eta^{2}$.
Substituting the logits expansion from step (ii) yields
\[
\vg^{t+1}
=\Big(\mI-\eta\,\mu\,\mH_{\vz}^{t}-\eta\,\mu\,\tilde\rho^{\,t}\,(\mH_{\vz}^{t})^{2}\Big)\vg^{t}
+\vr_{\vg}^{t},
\qquad \|\vr_{\vg}^{t}\|\le C\,\eta^{2},
\]
after absorbing constants into $C$. This proves the logit gradient statement.

Combining (i)–(iii) completes the proof, with a constant $C$ depending only on
$(L,\mu,\kappa)$, and the bounds holding for all $|\eta|\in(0,1]$ and
$|\rho|\le \kappa\sqrt{|\eta|}$.

For softmax cross-entropy,
\[
\nabla_{\vz} f(\vz,\vy)=\vp(\vz)-\vy,\qquad
\mH_{\vz}(\vz)=\nabla_{\vz}^{2} f(\vz,\vy)=\Diag(\vp(\vz))-\vp(\vz)\vp(\vz)^\top .
\]
Since $\vp(\vz)\in\Delta^{V-1}\subset[0,1]^V$ for all $\vz$, every entry of the
third derivative tensor $\nabla_{\vz}^{3} f(\vz,\vy)$ is a bounded polynomial in
$\vp(\vz)$ (hence in $[0,1]$). Therefore there exists a finite constant
$L_{\mathrm{sm}}(V)$ depending only on $V$ such that
\[
\|\nabla_\vz^{k} f(\vz,\vy)\|\le L_{\mathrm{sm}}(V),\qquad k=1,2,3 .
\]
In particular, $f$ is $C^\infty$ and Assumption~(1) of the theorem holds with
$L=L_{\mathrm{sm}}(V)$.
\end{proof}
\end{theorem}
In the remainder of this section, we specialize to the softmax cross-entropy loss.
\begin{proposition}
$\mH_{\vz}$ is symmetric positive semidefinite with 
$\ker(\mH_{\vz})=\mathrm{span}\{\vone\}$ and $\operatorname{rank}(\mH_{\vz})=V-1$. 
Moreover, for the residual $\vg$ we have $\vone^\top\vg=0$, hence $\vg\in\vone^\perp=\mathrm{range}(\mH_{\vz})$; in particular, given any eigenbasis of $\mH_{\vz}$ restricted to $\vone^\perp$, $\vg$
admits a unique coordinate representation in that basis.
\end{proposition}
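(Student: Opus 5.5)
We work from the explicit form $\mH_{\vz}=\Diag(\vp)-\vp\vp^\top$ recorded at the end of the proof of Theorem~\ref{app:dynamics}. Throughout we use that every softmax coordinate $p_k=e^{z_k}/\sum_j e^{z_j}$ is strictly positive and that $\sum_k p_k=1$.

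Symmetry is immediate from the formula. For positive semidefiniteness we compute, for any $\vv\in\R^V$,
\[
\vv^\top\mH_{\vz}\vv=\sum_k p_k v_k^2-\Big(\sum_k p_k v_k\Big)^2 .
\]
This is the variance of the random variable taking value $v_k$ with probability $p_k$. Equivalently, by Cauchy--Schwarz applied to $\sqrt{p_k}\cdot\sqrt{p_k}v_k$ together with $\sum_k p_k=1$, it is nonnegative.

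For the kernel, we first check that $\mH_{\vz}\vone=\vp-\vp(\vp^\top\vone)=\vp-\vp=\bm 0$, so $\mathrm{span}\{\vone\}\subseteq\ker(\mH_{\vz})$. Conversely, suppose $\mH_{\vz}\vv=\bm 0$. Then the quadratic form vanishes, so the variance above is zero. Since all $p_k>0$, this forces $v_k=\sum_j p_jv_j$ for every $k$, i.e.\ $\vv\in\mathrm{span}\{\vone\}$. This strict positivity of $\vp$ is the only delicate point. If some $p_k$ were zero, the kernel could be larger, so we should state explicitly that the softmax has full support for finite logits. The rank statement $\operatorname{rank}(\mH_{\vz})=V-1$ then follows from rank--nullity.

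Because $\mH_{\vz}$ is symmetric, $\mathrm{range}(\mH_{\vz})=\ker(\mH_{\vz})^\perp=\vone^\perp$. For the residual, $\vone^\top\vg=\vone^\top\vp-\vone^\top\vy=1-1=0$, so $\vg\in\vone^\perp=\mathrm{range}(\mH_{\vz})$.

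Finally, by the spectral theorem $\mH_{\vz}$ has an orthonormal eigenbasis. The subspace $\vone^\perp$ is invariant under $\mH_{\vz}$ and is orthogonal to the kernel, so $\mH_{\vz}$ restricted to $\vone^\perp$ has the $V-1$ positive eigenvalues, with orthonormal eigenvectors $\vv_1,\dots,\vv_{V-1}$ spanning $\vone^\perp$. Any vector of $\vone^\perp$, and in particular $\vg$, therefore has the unique expansion $\vg=\sum_k(\vv_k^\top\vg)\,\vv_k$, with uniqueness coming from linear independence of the basis.
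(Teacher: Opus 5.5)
Your proof is correct and follows essentially the same route as the paper: the variance identity $\vv^\top\mH_{\vz}\vv=\sum_k p_kv_k^2-(\sum_k p_kv_k)^2$ with strictly positive softmax probabilities gives positive semidefiniteness and $\ker(\mH_{\vz})=\mathrm{span}\{\vone\}$, rank--nullity gives the rank, $\mathrm{range}(\mH_{\vz})=\ker(\mH_{\vz})^\perp$ together with $\vone^\top\vg=0$ places $\vg$ in the range, and the spectral theorem on the invariant subspace $\vone^\perp$ gives the unique expansion. Your explicit check that $\mH_{\vz}\vone=\bm 0$ is a slight tightening: the paper passes from ``$\vv^\top\mH_{\vz}\vv=0$ iff $\vv\in\mathrm{span}\{\vone\}$'' directly to the kernel, tacitly using that for a positive semidefinite matrix a vanishing quadratic form implies $\mH_{\vz}\vv=\bm 0$.
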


\begin{proof}
Let $\vp=\softmax(\vz)\in(0,1)^V$ so that $\vone^\top\vp=1$, and recall
\[
\mH_{\vz}=\Diag(\vp)-\vp\vp^\top.
\]

For any $\vv\in\mathbb{R}^V$,
\[
\vv^\top \mH_{\vz}\,\vv
= \sum_{i=1}^V p_i v_i^2 - \Big(\sum_{i=1}^V p_i v_i\Big)^2
= \Var_{\vp}(v)\ \ge 0,
\]
hence $\mH_{\vz}$ is symmetric positive semidefinite. Moreover,
$\vv^\top \mH_{\vz}\,\vv=0$ iff $\Var_{\vp}(v)=0$, i.e., $v_i$ is constant across $i$.
Since $p_i>0$ for all $i$, this means $\vv=c\,\vone$, thus
\[
\ker(\mH_{\vz})=\mathrm{span}\{\vone\}
\quad\Rightarrow\quad
\operatorname{rank}(\mH_{\vz})=V-\dim\ker(\mH_{\vz})=V-1.
\]
Then
$\vone^\top\vg=\vone^\top\vp-\vone^\top\vy=0$, so $\vg\in\vone^\perp$.
For any symmetric matrix, $\mathrm{range}(\mH_{\vz})=(\ker(\mH_{\vz}))^\perp$; using
$\ker(\mH_{\vz})=\mathrm{span}\{\vone\}$ yields $\vone^\perp=\mathrm{range}(\mH_{\vz})$,
hence $\vg\in\mathrm{range}(\mH_{\vz})$.

Restrict $\mH_{\vz}$ to the invariant subspace $\vone^\perp$. Being symmetric,
$\mH_{\vz}\!\mid_{\vone^\perp}$ admits an orthonormal eigenbasis
$\{\vv_k\}_{k=1}^{V-1}$ associated with its positive eigenvalues. Since
$\vg\in\vone^\perp$, it has the unique expansion
$\vg=\sum_{k=1}^{V-1} e_k\,\vv_k$, with $e_k=(\vv_k)^\top\vg$.
\end{proof}

\begin{corollary}[Modal dynamics in the eigenbasis of $\,\mH_{\vz}^{t}$]
Under the same assumptions as Theorem~\ref{app:dynamics}.  For each $t$, let the spectral
decomposition of the symmetric positive–semidefinite matrix $\mH_{\vz}^{t}$ be
\[
\mH_{\vz}^{t} \;=\; \sum_{k=1}^{V-1} \lambda_{k}^{t}\, \vv_{k}^{t} (\vv_{k}^{t})^{\top},
\]
where $\lambda_{k}^{t}> 0,\ \ (\vv_{k}^{t})^{\top}\vv_{\ell}^{t}=\delta_{k\ell}$ are the non-zero eigenvalues and eigenvectors. Define the \emph{modal coefficients} of the
residual $\vg^{t}=\vp^{t}-\vy$ by
\[
e_{k}^{t}\;\coloneqq\; (\vv_{k}^{t})^{\top} \vg^{t},\qquad k=1,\dots,V-1.
\]
Then there exists a constant $C>0$ such that
for all nonzero modes $k\ge 1$,
\begin{equation}\label{eq:modal-recursion}
(\vv_{k}^{t})^{\top} \vg^{t+1}
= \Big( 1 \;-\; \eta\,\mu\,\big[\lambda_{k}^{t} + \tilde\rho^{\,t}(\lambda_{k}^{t})^{2}\big] \Big)\, e_{k}^{t}
\;+\; r_{k}^{t},
\qquad |r_{k}^{t}| \;\le\; C\,\eta^{2}.
\end{equation}
\end{corollary}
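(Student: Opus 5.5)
The plan is to project the residual expansion of Theorem~\ref{app:dynamics} onto each eigenvector of $\mH_{\vz}^{t}$. By that theorem (softmax cross-entropy case),
\[
\vg^{t+1}=\Big(\mI-\eta\,\mu\,\mH_{\vz}^{t}-\eta\,\mu\,\tilde\rho^{\,t}(\mH_{\vz}^{t})^{2}\Big)\vg^{t}+\vr_{\vg}^{t},\qquad \|\vr_{\vg}^{t}\|\le C\eta^2,
\]
with $C$ depending only on $(L,\mu,\kappa)$. For softmax cross-entropy, $L=L_{\mathrm{sm}}(V)$.

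\textbf{Step 1: the linear term.} Fix $k\in\{1,\dots,V-1\}$ and left-multiply by $(\vv_k^{t})^\top$. The vector $\vv_k^{t}$ is an eigenvector of the symmetric matrix $\mH_{\vz}^{t}$ with eigenvalue $\lambda_k^{t}$. Hence $(\vv_k^{t})^\top\mH_{\vz}^{t}=\lambda_k^{t}(\vv_k^{t})^\top$ and $(\vv_k^{t})^\top(\mH_{\vz}^{t})^2=(\lambda_k^{t})^2(\vv_k^{t})^\top$. This gives
\[
(\vv_k^{t})^\top\Big(\mI-\eta\mu\mH_{\vz}^{t}-\eta\mu\tilde\rho^{\,t}(\mH_{\vz}^{t})^2\Big)\vg^{t}=\Big(1-\eta\mu\big[\lambda_k^{t}+\tilde\rho^{\,t}(\lambda_k^{t})^2\big]\Big)e_k^{t}.
\]
The preceding proposition guarantees $\vg^{t}\in\vone^\perp$ and that the eigenvectors with nonzero eigenvalues span $\vone^\perp$. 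So these $V-1$ coordinates capture $\vg^{t}$ completely. The identity itself, however, needs only the eigen-relation for each individual $k$.

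\textbf{Step 2: the remainder.} Set $r_k^{t}\coloneqq(\vv_k^{t})^\top\vr_{\vg}^{t}$. Since $\|\vv_k^{t}\|=1$, Cauchy--Schwarz gives $|r_k^{t}|\le\|\vr_{\vg}^{t}\|\le C\eta^2$. The constant is the same as in Theorem~\ref{app:dynamics}, so it is uniform in $k$ and $t$.

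\textbf{Main obstacle.} Nothing is genuinely hard here; the only point needing care is the case $\|\vg^{t}\|=0$. In that case $\tilde\rho^{\,t}=0$ by convention, and the statement still follows from the same computation. The definition of $\tilde\rho^{\,t}$ is consistent with the theorem's. One should also note that $\tilde\rho^{\,t}$ may be large when $\|\vg^{t}\|$ is small, but it enters only the exact linear term, never the remainder bound. The remainder bound comes from the theorem, which controls the perturbation by $\|\Delta\mW_{\mathrm{adv}}^{t}\|=|\rho|$ rather than by $\tilde\rho^{\,t}$. I would remark on this explicitly so that $C$ is visibly independent of $\tilde\rho^{\,t}$.
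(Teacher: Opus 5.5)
Your proposal is correct and matches the paper's proof essentially step for step. Like the paper, you project the residual expansion of Theorem~\ref{app:dynamics} onto $\vv_k^{t}$, apply the eigen-relations for $\mH_{\vz}^{t}$ and $(\mH_{\vz}^{t})^2$, set $r_k^{t}=(\vv_k^{t})^\top\vr_{\vg}^{t}$, and bound it by $\|\vr_{\vg}^{t}\|\le C\eta^2$ using $\|\vv_k^{t}\|=1$. Your extra remarks are accurate but not needed: the $\|\vg^{t}\|=0$ case is actually vacuous for softmax cross-entropy, and $C$ is independent of $\tilde\rho^{\,t}$ because $\|\tilde\rho^{\,t}\mH_{\vz}^{t}\vg^{t}\|\le|\rho|\sqrt{\mu}\,\|\mH_{\vz}^{t}\|$, which the theorem already accounts for.
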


\begin{proof}
By Theorem~\ref{app:dynamics} (residual expansion), we have
\begin{equation}\label{eq:resid-step}
\vg^{t+1}
=
\Big(\mI-\eta\,\mu\,\mH_{\vz}^{t}-\eta\,\mu\,\tilde\rho^{\,t}\,(\mH_{\vz}^{t})^{2}\Big)\vg^{t}
+\vr_{\vg}^{t},
\qquad \|\vr_{\vg}^{t}\|\le C\,\eta^{2}.
\end{equation}
Fix $t$ and let the eigendecomposition of $\mH_{\vz}^{t}$ be
$\mH_{\vz}^{t}=\sum_{k=1}^{V-1}\lambda_k^{t}\,\vv_k^{t}(\vv_k^{t})^\top$ with
$\lambda_k^{t}>0$ and $\{\,\vv_k^{t}\,\}_{k=1}^{V-1}$ orthonormal.  (The zero
mode corresponding to $\lambda=0$ is orthogonal to $\vg^t$ in the softmax-CE
case and is therefore omitted.)

Project \eqref{eq:resid-step} onto the eigenvector $\vv_k^{t}$:
\[
(\vv_k^{t})^\top \vg^{t+1}
=
(\vv_k^{t})^\top
\Big(\mI-\eta\,\mu\,\mH_{\vz}^{t}-\eta\,\mu\,\tilde\rho^{\,t}\,(\mH_{\vz}^{t})^{2}\Big)\vg^{t}
\;+\;(\vv_k^{t})^\top \vr_{\vg}^{t}.
\]
Using the eigen-relations $\mH_{\vz}^{t}\vv_k^{t}=\lambda_k^{t}\vv_k^{t}$ and
$(\mH_{\vz}^{t})^{2}\vv_k^{t}=(\lambda_k^{t})^{2}\vv_k^{t}$ and the definition
$e_k^{t}=(\vv_k^{t})^\top \vg^{t}$, we obtain
\[
(\vv_k^{t})^\top \vg^{t+1}
=
\Big(1-\eta\,\mu\,\lambda_k^{t}-\eta\,\mu\,\tilde\rho^{\,t}(\lambda_k^{t})^{2}\Big)e_k^{t}
\;+\; r_k^{t},
\qquad
r_k^{t}\coloneqq(\vv_k^{t})^\top \vr_{\vg}^{t}.
\]
Finally, since $\|\vv_k^{t}\|=1$ we have $|r_k^{t}|\le \|\vr_{\vg}^{t}\|\le C\,\eta^{2}$,
which is exactly \eqref{eq:modal-recursion}. This completes the proof.
\end{proof}

Following \cite{ren2024learning}, we define the one-step confidence ratio as follows.
\begin{definition}[One–step confidence ratio]\label{def:one-step-ratio}
For each class $i\in[V]$ and update rule $a\in\{\mathrm{GD},\mathrm{SAM}\}$, define the
one–step confidence ratio
\[
\alpha_i^{(a)} \;:=\; \frac{p_i^{t+1}(a)}{p_i^{t}} .
\]
\end{definition}
As a consequence of Theorem~\ref{app:dynamics}, we obtain the following lemma.
\begin{lemma}[One–step ratio representation and factorization]\label{lem:ratio-factor}
Under the assumptions of Theorem~\ref{app:dynamics}, fix an iteration $t$ and set $\eta'=\eta\,\mu$. The ratio $\alpha_i^{(a)}$
admits the representation
\[
\alpha_i^{(a)}
= \frac{\sum_{j=1}^{V} e^{z_j^{t}}}{\sum_{j=1}^{V}\beta_{j}^{(a)}\,e^{z_j^{t}}},
\]
where
\[
\beta_{j}^{\mathrm{GD}}
=\exp\!\left\{-\eta'\!\left[(p_j^{t}-y_j)-(p_i^{t}-y_i)\right]\right\},
\]
and the SAM correction appears multiplicatively as
\[
\beta_j^{\mathrm{SAM}}
=\underbrace{\exp\{-\eta'(g_j^{t}-g_i^{t})\}}_{\beta_j^{\mathrm{GD}}}\;
\underbrace{\exp\!\big\{-\eta'\tilde\rho^{\,t}\Delta_{j,i}^{t}\big\}}_{\text{curvature factor}}\;
\underbrace{\exp\{r_j^{t}-r_i^{t}\}}_{\text{remainder factor}},
\quad
\Delta_{j,i}^{t}\!:=\!(\mH_{\vz}^{t}\vg^{t})_{j}-(\mH_{\vz}^{t}\vg^{t})_{i},
\]
and there exists a constant $C_1>0$ that depends only on $(L,\mu,\kappa)$ such that
$e^{-2C_1\eta^2}\le \exp\{r_j^{t}-r_i^{t}\}\le e^{2C_1\eta^2}$ for all $i,j$.
\end{lemma}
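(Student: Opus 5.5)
The plan is to work directly from the logit expansion in Theorem~\ref{app:dynamics} and push it through the softmax. Write the logit update coordinatewise as
\[
z_j^{t+1}=z_j^{t}-\eta'\bigl(g_j^{t}+\tilde\rho^{\,t}(\mH_{\vz}^{t}\vg^{t})_j\bigr)+(\vr_{\vz}^{t})_j ,
\]
where $\eta'=\eta\mu$. For GD we have $\tilde\rho^{\,t}=0$. We will also define the scalar remainders $r_j^{t}\coloneqq(\vr_{\vz}^{t})_j$, so that $|r_j^{t}|\le\|\vr_{\vz}^{t}\|\le C_1\eta^2$, with $C_1$ the constant from the logits part of Theorem~\ref{app:dynamics}.

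Next we write $p_i^{t+1}=e^{z_i^{t+1}}/\sum_j e^{z_j^{t+1}}$ and divide by $p_i^{t}=e^{z_i^{t}}/\sum_j e^{z_j^{t}}$. This gives
\[
\alpha_i=\frac{\sum_j e^{z_j^{t}}}{\sum_j e^{z_j^{t+1}-z_i^{t+1}+z_i^{t}}}.
\]
Expanding the exponent,
\[
z_j^{t+1}-z_i^{t+1}+z_i^{t}=z_j^{t}+\bigl[(z_j^{t+1}-z_j^{t})-(z_i^{t+1}-z_i^{t})\bigr],
\]
so each term of the denominator is $\beta_j e^{z_j^{t}}$ with
\[
\beta_j=\exp\{(z_j^{t+1}-z_j^{t})-(z_i^{t+1}-z_i^{t})\}.
\]
Substituting the logit increments splits the exponent into three additive pieces: $-\eta'(g_j^t-g_i^t)$, then $-\eta'\tilde\rho^{\,t}\Delta^t_{j,i}$, then $r_j^t-r_i^t$. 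Exponentiating turns this sum into the stated product of the GD factor, the curvature factor, and the remainder factor. Using $g_j=p_j-y_j$ recovers the explicit form of $\beta_j^{\mathrm{GD}}$.

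One point needs care. Exact GD has no remainder here, because for a linear model $\vz^{t+1}=\vz^t-\eta'\vg^t$ holds exactly. So for GD I will state $\beta^{\mathrm{GD}}_j$ as the exact factor, and the remainder factor only appears in the SAM formula. This is consistent with the displayed factorization, where the remainder factor sits only in $\beta^{\mathrm{SAM}}$.

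Finally, the bound $|r_j^t-r_i^t|\le 2C_1\eta^2$ gives the two-sided estimate $e^{-2C_1\eta^2}\le\exp\{r_j^t-r_i^t\}\le e^{2C_1\eta^2}$ by monotonicity of $\exp$. The constant $C_1$ depends only on $(L,\mu,\kappa)$ because it is inherited from Theorem~\ref{app:dynamics}. I expect no real obstacle. The only delicate step is making sure the remainder is defined coordinatewise from $\vr_{\vz}^t$, and not from the residual expansion $\vr_{\vg}^t$, since it is the logit remainder that enters the exponent.
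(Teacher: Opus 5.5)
Your proposal is correct and follows essentially the same route as the paper: substitute the logit expansion of Theorem~\ref{app:dynamics} into the softmax ratio to get $\beta_j=\exp\{\Delta z_j-\Delta z_i\}$, split the exponent into the GD, curvature and remainder pieces, and bound $|r_j^t-r_i^t|\le 2\|\vr_{\vz}^t\|_\infty\le 2C_1\eta^2$. Your remark that GD is exact in the linear model ($\vz^{t+1}=\vz^t-\eta'\vg^t$, so $\beta_j^{\mathrm{GD}}$ carries no remainder) is a useful clarification that the paper leaves implicit.
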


\begin{proof}
By Theorem~\ref{app:dynamics} (logits line),
\[
\Delta\vz\;\coloneqq\;\vz^{t+1}-\vz^{t}
= -\,\eta'\big(\vg^{t}+\tilde\rho^{\,t}\mH_{\vz}^{t}\vg^{t}\big)+\vr_{\vz}^{t},
\qquad \|\vr_{\vz}^{t}\|_{\infty}\le \|\vr_{\vz}^{t}\|_2\le C_1\,\eta^{2},
\]
where $C_1$ depends only on $(L,\mu,\kappa)$ and the hypothesis $|\rho|\le\kappa\sqrt{|\eta|}$ is in force.

For any increment $\Delta\vz$,
\[
\alpha_i=\frac{p_i(\vz^{t}+\Delta\vz)}{p_i(\vz^{t})}
=\frac{\sum_{j}e^{z_j^{t}}}{\sum_{j}\exp\{\Delta z_j-\Delta z_i\}e^{z_j^{t}}}
= \frac{\sum_{j}e^{z_j^{t}}}{\sum_{j}\beta_j e^{z_j^{t}}}.
\]
With the above $\Delta\vz$,
\[
\beta_j^{\mathrm{SAM}}
=\underbrace{\exp\{-\eta'(g_j^{t}-g_i^{t})\}}_{\beta_j^{\mathrm{GD}}}\;
\underbrace{\exp\!\big\{-\eta'\tilde\rho^{\,t}\Delta_{j,i}^{t}\big\}}_{\text{curvature factor}}\;
\underbrace{\exp\{r_j^{t}-r_i^{t}\}}_{\text{remainder factor}},
\quad
\Delta_{j,i}^{t}\!:=\!(\mH_{\vz}^{t}\vg^{t})_{j}-(\mH_{\vz}^{t}\vg^{t})_{i}.
\]
From $\|\vr_{\vz}^{t}\|_{\infty}\le C_1\eta^2$, we have
$e^{-2C_1\eta^2}\le \exp\{r_j^{t}-r_i^{t}\}\le e^{2C_1\eta^2}$ for all $i,j$.
\end{proof}
We begin by proving the first part of Corollary~\ref{coro:prob}, which relies on the same assumptions as Theorem~\ref{app:dynamics}, except that we set $\rho=\kappa\sqrt{\eta}$ instead of requiring $\rho \le \kappa\sqrt{\eta}$.
\begin{corollary}[One–step confidence ratio of $y^*$ under SAM vs.\ GD]\label{app:coro prob}
Under the assumptions of Theorem~\ref{app:dynamics}, fix an iteration $t$ and set $\eta'=\eta\,\mu$ and $|\rho|= \kappa\sqrt{|\eta|}$. Let $y$ be the ground–truth label and $y^*=\arg\max_{j\neq y}p_j^{t}$ the most confident incorrect class.
Assume the sign condition $\eta'\tilde\rho^{\,t}>0$.
Then there exists $\eta_{0}=\eta_{0}(\vp^{t},\mH_{\vz}^{t},\|\vg^{t}\|,\mu,\kappa,L)>0$ such that, for all step sizes
$0<|\eta|\le \eta_{0}$, the following one–step inequalities hold:
\[
\alpha_{y^*}^{\mathrm{SAM}} \;\le\; \alpha_{y^*}^{\mathrm{GD}}.
\]
Moreover, the inequality is strict whenever $p_{y^*}^{t}\in(0,1)$ and $\tilde\rho^{\,t}\neq 0$. In particular, when $\tilde\rho^{\,t}=0$ (no SAM), equality holds.
\end{corollary}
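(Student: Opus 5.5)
The plan is to work entirely through the multiplicative factorization in Lemma~\ref{lem:ratio-factor}, taking $i=y^*$. Since the numerator $\sum_j e^{z_j^t}$ is common to both update rules, the claim $\alpha_{y^*}^{\mathrm{SAM}}\le\alpha_{y^*}^{\mathrm{GD}}$ is equivalent to
\[
\sum_{j}\beta_j^{\mathrm{GD}}\,e^{z_j^t}\Big(\exp\{-\eta'\tilde\rho^{\,t}\Delta^t_{j,y^*}\}\exp\{r_j^t-r_{y^*}^t\}-1\Big)\;\ge\;0 .
\]
Here $\alpha^{\mathrm{GD}}$ is the exact GD ratio, i.e.\ the $\tilde\rho^{\,t}=0$ case with no remainder. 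Because $\eta'\tilde\rho^{\,t}>0$, it suffices to show that $\Delta^t_{j,y^*}\le 0$ for every $j$, i.e.\ that $(\mH_\vz^t\vg^t)_{y^*}$ is the largest coordinate of $\mH_\vz^t\vg^t$. We also need to show that at least one index gives a strict, quantitative gain that dominates the $O(\eta^2)$ remainder factor.

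\textbf{Step 1 (explicit form of $\mH_\vz\vg$).} Using $\mH_\vz=\Diag(\vp)-\vp\vp^\top$, we get $(\mH_\vz\vg)_j=p_j(g_j-s)$ with $s:=\vp^\top\vg=\sum_k p_k^2-p_y$. For $j\neq y$ we have $g_j=p_j$, so $(\mH_\vz\vg)_j=h(p_j)$ with $h(x)=x(x-s)$. For $j=y$ we have $(\mH_\vz\vg)_y=p_y(p_y-1-s)$.

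\textbf{Step 2 (ordering).} We prove two bounds on $s$.
\begin{itemize}
\item Since $\sum_{k\ne y}p_k^2\le p_{y^*}(1-p_y)$, we get $s\le (1-p_y)(p_{y^*}-p_y)\le p_{y^*}$. Hence $h(p_{y^*})\ge 0=h(0)$. Because $h$ is convex, its maximum on $[0,p_{y^*}]$ is attained at an endpoint, so $h(p_j)\le h(p_{y^*})$ for all $j\neq y$.
\item Since $s\ge p_y^2-p_y$, we get $p_y-1-s\le-(1-p_y)^2<0$. Hence $(\mH_\vz\vg)_y<0\le(\mH_\vz\vg)_{y^*}$.
\end{itemize}
Together these give $\Delta^t_{j,y^*}\le0$ for all $j$, with strict inequality at $j=y$. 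Consequently every curvature factor is at least $1$, and the $j=y$ factor is at least $\exp\{\eta'\tilde\rho^{\,t}\,\delta\}$ with $\delta:=p_y(1-p_y)^2+h(p_{y^*})>0$.

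\textbf{Step 3 (absorbing the remainder; main obstacle).} The curvature gain is of order $|\eta'\tilde\rho^{\,t}|=\mu^{3/2}\kappa|\eta|^{3/2}/\|\vg^t\|$, which is $\Theta(|\eta|^{3/2})$ because $|\rho|=\kappa\sqrt{|\eta|}$ exactly. The remainder factors deviate from $1$ only by $e^{\pm2C_1\eta^2}$. I would lower-bound the bracketed sum by two contributions:
\begin{itemize}
\item the gain from the index $j=y$, namely $\beta_y^{\mathrm{GD}}e^{z_y^t}\big(e^{c|\eta|^{3/2}}e^{-2C_1\eta^2}-1\big)$;
\item the worst-case loss over all indices, namely $\sum_j\beta_j^{\mathrm{GD}}e^{z_j^t}\big(1-e^{-2C_1\eta^2}\big)$.
\end{itemize}
The ratio of the loss to the gain is $O(|\eta|^{1/2})$, and the weights $\beta_j^{\mathrm{GD}}e^{z_j^t}$ are comparable for small $\eta$ since $\beta_j^{\mathrm{GD}}\to1$. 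This yields an explicit $\eta_0$ depending on $\vp^t$, $\|\vg^t\|$, $\mu$, $\kappa$, $L$ (through $C_1$) and $\delta$, below which the sum is strictly positive. That gives strictness whenever $\tilde\rho^{\,t}\ne0$; $p_{y^*}^t\in(0,1)$ holds automatically for softmax and ensures $\delta>0$ and $p_y<1$.

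The delicate point is exactly this bookkeeping: the margin $\delta$ must be kept bounded away from zero, which is why $\eta_0$ depends on the current state. When $\tilde\rho^{\,t}=0$, the SAM and GD updates coincide (the remainder vanishes, since there is no perturbation), so equality holds.
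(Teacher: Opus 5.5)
Your proposal is correct and follows essentially the same route as the paper's proof. Both use the factorization from Lemma~\ref{lem:ratio-factor} with $i=y^*$ and the sign structure $\Delta^t_{j,y^*}\le 0$ with a strict index, and both let the $\Theta(|\eta|^{3/2})$ curvature gain dominate the $1\pm O(\eta^2)$ remainder factor. Your Step~2 (bounds on $s$, convexity of $h$, and $j=y$ as the explicit strict index with margin $\delta$) supplies the verification the paper leaves as ``one checks,'' and your term-by-term remainder bookkeeping is equivalent to the paper's global sandwich $e^{-2C_1\eta^2}\widetilde D_{y^*}\le D^{\mathrm{SAM}}_{y^*}$.
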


\begin{proof}
Define
\[
D_i^{\mathrm{GD}}:=\sum_{j}\beta_j^{\mathrm{GD}}e^{z_j^{t}},\qquad
\widetilde D_i:=\sum_{j}\beta_j^{\mathrm{GD}}e^{z_j^{t}}
\exp\{-\eta'\tilde\rho^{\,t}\Delta_{j,i}^{t}\},\qquad
D_i^{\mathrm{SAM}}:=\sum_{j}\beta_j^{\mathrm{SAM}}e^{z_j^{t}},
\]
with
\[
\beta_j^{\mathrm{SAM}}
=\beta_j^{\mathrm{GD}}
\exp\!\big\{-\eta'\tilde\rho^{\,t}\Delta_{j,i}^{t}\big\}
\exp\{r_j^{t}-r_i^{t}\},
\qquad
\Delta_{j,i}^{t}\!:=\!(\mH_{\vz}^{t}\vg^{t})_{j}-(\mH_{\vz}^{t}\vg^{t})_{i}.
\]
By the remainder bounds of Lemma~\ref{lem:ratio-factor}, $e^{-2C_1\eta^2}\widetilde D_i\le D_i^{\mathrm{SAM}}\le e^{2C_1\eta^2}\widetilde D_i$.

With $\mH_{\vz}^{t}=\mathrm{diag}(\vp^{t})-\vp^{t}(\vp^{t})^{\top}$ and $\vg^{t}=\vp^{t}-\ve_{y}$,
\[
(\mH_{\vz}^{t}\vg^{t})_{i}=p_i^{t}\,(p_i^{t}-y_i-C^{t}),\qquad C^{t}:=\sum_{k}(p_k^{t})^{2}-p_y^{t}.
\]
Then $C^{t}\le p_{y^*}^{t}$ and one checks: for $i=y^*$, $\Delta_{j,y^*}^{t}\le 0$ for all $j$, and $\Delta_{j,y^*}^{t}<0$ for some $j$ whenever $p_{y^*}^{t}\in(0,1)$.

Next, by $e^{x}\ge 1+x$ and the sign structure of $\Delta_{j,i}^{t}$,
\[
\frac{\widetilde D_{y^*}}{D_{y^*}^{\mathrm{GD}}}
=\sum_{j}w_j^{(y^*)}\,e^{-\eta'\tilde\rho^{\,t}\Delta_{j,y^*}^{t}}
\ \ge\ 1+\eta'\tilde\rho^{\,t}\!\sum_{j}w_j^{(y^*)}(-\Delta_{j,y^*}^{t})
\ \ge\ 1+c_{y^*}\,\eta'\tilde\rho^{\,t},
\]
for some $c_{y^*}>0$ whenever $p_{y^*}^{t}\in(0,1)$; here $w_j^{(i)}:=\beta_j^{\mathrm{GD}}e^{z_j^{t}}/D_i^{\mathrm{GD}}$ are positive weights.

Now use the scaling $|\rho|=\kappa\sqrt{|\eta|}$: then $\eta'\tilde\rho^{\,t}=\Theta(\eta^{3/2})$, whereas $e^{\pm 2C_1\eta^2}=1\pm O(\eta^2)$. Hence there exists
$\eta_0>0$ (depending only on $(\vp^{t},\mH_{\vz}^{t},\|\vg^{t}\|,\mu,\kappa,L)$) such that for $0<|\eta|\le\eta_0$,
\[
D_{y^*}^{\mathrm{SAM}}\ \ge\ e^{-2C_1\eta^2}\,\widetilde D_{y^*}\ \ge\ D_{y^*}^{\mathrm{GD}}\big(1+\tfrac12 c_{y^*}\eta'\tilde\rho^{\,t}\big).
\]

Since $\alpha_i=\big(\sum_{j}e^{z_j^{t}}\big)\!/D_i$, we obtain for $0<|\eta|\le\eta_0$:
\[
\alpha_{y^*}^{\mathrm{SAM}}\ \le\ \alpha_{y^*}^{\mathrm{GD}},
\]
with strict inequality under the stated nondegeneracy conditions (because then $c_{y^*}>0$). If $\|\vg^{t}\|=0$ (so $\tilde\rho^{\,t}=0$ by convention), equality holds. This completes the proof.
\end{proof}
The proof of Corollary~\ref{app:coro prob} relies on the sign of $\Delta_{j,i}$, which holds automatically for $i=y^*$. 
However, this need not be true for an arbitrary ground-truth label $y$, except in certain special cases such as binary classification. 
Therefore, we further introduce the following notation and a technical assumption.
Intuitively, we require that most of the probability mass concentrates on the two classes of interest, $y$ and $y^*$.
\paragraph{Notation.}
We define
\[
S:=p_y^{t}+p_{y^*}^{t},\qquad \tau:=1-S,\qquad \bar p_y:=\frac{p_y^{t}}{S}\in(0,1),
\qquad \Delta_{\mathrm{bin}}(\bar p_y):=4\bar p_y(1-\bar p_y)^2,
\]
and let $B:=\sqrt{2}$.

\begin{assumption}[Top-2 dominance and feasible curvature gap]\label{assump:top2-feasible}
Under the notation above, assume there exists $\gamma_0>0$ such that
\[
\frac{4B\,e^{2}}{p_{y^*}^{t}}\,\tau\ \le\ \gamma_0
\ \le\ \Delta_{\mathrm{bin}}(\bar p_y)-6\tau.
\]
\end{assumption}
\paragraph{Remark.}
For any fixed $\bar p_y\in(0,1)$, 
$\Delta_{\mathrm{bin}}(\bar p_y)=4\bar p_y(1-\bar p_y)^2$ 
is a positive constant, and hence 
$\Delta_{\mathrm{bin}}(\bar p_y)-6\tau$ 
remains strictly positive for all sufficiently small $\tau$,
while the left-hand side scales as $O(\tau)$ as $\tau\to0$.
Hence, whenever the tail mass $\tau=1-(p_y^t+p_{y^*}^t)$ is sufficiently small, 
there exists a non-empty interval of feasible $\gamma_0$ satisfying the assumption. 
Intuitively, this corresponds to the regime where most probability mass 
is concentrated on the two dominant classes $y$ and $y^*$.

\begin{lemma}[Top-2 reduction of the curvature gap]\label{lem:top2-curv-approx}
Under the notation above, for softmax cross-entropy,
\[
\Delta_{y^*,y}^{t}:=(\mH_{\vz}^{t}\vg^{t})_{y^*}-(\mH_{\vz}^{t}\vg^{t})_{y}
=\Delta_{\mathrm{bin}}(\bar p_y)+\zeta^{t},
\qquad |\zeta^{t}|\le 6\tau.
\]
\end{lemma}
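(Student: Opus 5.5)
The plan is a direct computation: write $\Delta_{y^*,y}^t$ as an explicit polynomial in $(p_y^t,p_{y^*}^t)$ plus a tail contribution, then compare it with its value at $\tau=0$.

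\textbf{Step 1 (closed form).} Use the formula already derived in the proof of Corollary~\ref{app:coro prob}: $(\mH_{\vz}^t\vg^t)_i=p_i^t(g_i^t-C^t)$ with $C^t=\sum_k(p_k^t)^2-p_y^t$. Since $g_y^t=p_y^t-1$ and $g_{y^*}^t=p_{y^*}^t$, this gives
\[
\Delta_{y^*,y}^t=(p_{y^*}^t)^2-(p_y^t)^2+p_y^t-C^t\,(p_{y^*}^t-p_y^t).
\]
Now substitute $p_y^t=S\bar p_y$, $p_{y^*}^t=S(1-\bar p_y)$, and $\sum_k(p_k^t)^2=S^2\bigl(\bar p_y^2+(1-\bar p_y)^2\bigr)+T$. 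Here $T:=\sum_{k\notin\{y,y^*\}}(p_k^t)^2$ satisfies $0\le T\le\tau^2\le\tau$. After this substitution $\Delta_{y^*,y}^t=\Psi(S,T)$ is an explicit polynomial in $S$ and $T$, with coefficients depending only on $\bar p_y\in(0,1)$.

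\textbf{Step 2 (binary value).} Evaluate $\Psi(1,0)$, which is the exact binary case $p_y^t=\bar p_y$, $p_{y^*}^t=1-\bar p_y$. Put $q=1-\bar p_y$. Then $C=\bar p_y^2+q^2-\bar p_y=q^2-\bar p_yq$, and
\[
\Psi(1,0)=q^2-\bar p_y^2+\bar p_y-q(q-\bar p_y)^2 .
\]
I will simplify this to $4\bar p_yq^2$ using $\bar p_y+q=1$; a sanity check at $\bar p_y=\tfrac12$ gives $\tfrac12$ on both sides. So $\Psi(1,0)=\Delta_{\mathrm{bin}}(\bar p_y)$, and I define $\zeta^t:=\Psi(S,T)-\Psi(1,0)$.

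\textbf{Step 3 (bounding $\zeta^t$).} Split the error into a tail-square part and a scaling part.
\begin{itemize}
\item The $T$-dependence enters only through $-T(p_{y^*}^t-p_y^t)$, which is bounded in absolute value by $T\le\tau$.
\item For the $S$-dependence, write $\Psi(S,0)-\Psi(1,0)$ term by term. Each monomial has the form $S^m c_m(\bar p_y)$ with $m\in\{1,2,3\}$.
\item Use $|1-S^m|\le m(1-S)=m\tau$ to bound each term. The coefficients are $|q^2-\bar p_y^2|\le1$ (for $m=2$), $\bar p_y\le1$ (for $m=1$), and $|\bar p_y^2+q^2|\cdot|q-\bar p_y|\le1$ (for $m=3$).
\item The linear-in-$\bar p_y$ term $-\bar p_y\cdot S(q-\bar p_y)$, coming from $-p_y^t$ inside $C^t$, is bounded similarly.
\end{itemize}
Summing these pieces gives something like $2\tau+\tau+3\tau$ plus the $T$ term. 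I will organise the bookkeeping so the total is at most $6\tau$, possibly by grouping the $S^2$ terms (for example $\bar p_y(1-S)$ together with the $q\bar p_y$ pieces) more tightly.

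\textbf{Main obstacle.} Steps 1 and 2 are mechanical. The difficulty is the constant in Step 3: a naive triangle inequality might give $7\tau$ rather than $6\tau$. My first attempt will be to factor $\Psi(S,0)$ as $S\,A(\bar p_y)+S^2B(\bar p_y)-S^3D(\bar p_y)$ and bound $|A|(1-S)+|B|(1-S^2)+|D|(1-S^3)$ jointly, using $\bar p_y+q=1$ to keep $|A|+2|B|+3|D|$ small. If that still exceeds the target, I will use $T\le\tau^2$ instead of $T\le\tau$ to recover the slack.
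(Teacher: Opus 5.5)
Your Steps 1 and 2 are correct and follow the paper's route: it also writes $\Delta^t_{y^*,y}$ as an explicit polynomial in $S$ plus a tail-square term. The gap is in Step 3. Bounding each power of $S$ separately with the triangle inequality cannot give $6\tau$ uniformly in $\bar p_y$.

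First, a bookkeeping slip. The $-p_y^t$ inside $C^t$ contributes $+p_y^t(p_{y^*}^t-p_y^t)=S^2\bar p_y(q-\bar p_y)$. This is an $S^2$ term, not $-\bar p_y S(q-\bar p_y)$. So the full $S^2$ coefficient is $B=(q-\bar p_y)(1+\bar p_y)$, whose absolute value approaches $2$, not $1$.

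In your notation $A=\bar p_y$ and $D=(\bar p_y^2+q^2)(q-\bar p_y)$. As $\bar p_y\to 1$ one has $A\to 1$, $B\to -2$, $D\to -1$, so $|A|+2|B|+3|D|\to 8$. It is already about $7.8$ at $\bar p_y=0.99$ and about $6.9$ at $\bar p_y=0.95$. Your bound $|A|(1-S)+|B|(1-S^2)+|D|(1-S^3)$ is therefore roughly $8\tau$ for small $\tau$.

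Yet the true deviation there is tiny: at $\bar p_y=1$ exactly, $\Psi(S,0)=S-2S^2+S^3=S\tau^2$. The excess sits entirely in the $S$-part, so switching to $T\le\tau^2$ cannot recover it. Regrouping the $S^2$ pieces among themselves does not help either, because the cancellation occurs across different powers of $S$. Note also that your own tally "$2\tau+\tau+3\tau$ plus the $T$ term" already exceeds $6\tau$, even before the missing $S^2$ piece is added.

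The missing step is to substitute $S=1-\tau$ and expand \emph{before} taking absolute values:
\[
\Psi(S,0)-\Psi(1,0)=-\tau\,(A+2B-3D)+\tau^2\,(B-3D)+\tau^3 D .
\]
Write $u=\bar p_y$. Then:
\begin{itemize}
\item $A+2B-3D=12u^3-22u^2+11u-1$, which lies in $[-1,1]$ on $[0,1]$. It equals $-1$ at $u=0$ and $0$ at $u=1$; this is precisely the cancellation your termwise bound discards.
\item $B-3D=(2u-1)^2(3u-2)$, whose absolute value is at most $2$.
\item $|D|\le 1$.
\end{itemize}
Add the tail term, which satisfies $|TS(q-u)|\le T\le\tau$. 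This gives $|\zeta^t|\le \tau+2\tau^2+\tau^3+\tau\le 5\tau\le 6\tau$.

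This is exactly the paper's argument. Its $A(u)$, $B(u)$, $C(u)$ are your $A+2B-3D$, $B-3D$, and $-D$ respectively.
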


\begin{proof}
Let $a:=p_y^{t}$, $b:=p_{y^*}^{t}$, $S:=a+b=1-\tau$, $u:=\bar p_y:=a/S$, and
\[
q:=\sum_{j\notin\{y,y^*\}}(p_j^{t})^2 \le \sum_{j\notin\{y,y^*\}}p_j^{t}=\tau.
\]
For softmax cross-entropy, we have for any $i$,
\[
(\mH_{\vz}^{t}\vg^{t})_{i}
= p_i^{t}\Big(p_i^{t}-y_i-\big(\sum_{k}(p_k^{t})^{2}-p_y^{t}\big)\Big).
\]
A straightforward expansion then yields the following explicit form:
\[
\Delta_{y^*,y}^{t}
= \Delta_{\mathrm{bin}}(u)
+ u(S-1)
+ (1-u-2u^2)(S^2-1)
+ (4u^3-6u^2+4u-1)(S^3-1)
+ (2u-1)\,S\,q,
\]
where $\Delta_{\mathrm{bin}}(u)=4u(1-u)^2$.

Using $S=1-\tau$, we have
\[
S^2-1=(1-\tau)^2-1 = -2\tau+\tau^2,
\qquad
S^3-1=(1-\tau)^3-1 = -3\tau+3\tau^2-\tau^3.
\]
Substituting these into the expansion yields
\[
\Delta_{y^*,y}^{t}-\Delta_{\mathrm{bin}}(u)
= -A(u)\tau + B(u)\tau^2 - C(u)\tau^3 + (2u-1)\,S\,q,
\]
where
\[
A(u):=u+2(1-u-2u^2)+3(4u^3-6u^2+4u-1)=12u^3-22u^2+11u-1,
\]
\[
B(u):=(1-u-2u^2)+3(4u^3-6u^2+4u-1)=12u^3-20u^2+11u-2=(2u-1)^2(3u-2),
\]
\[
C(u):=4u^3-6u^2+4u-1=(2u-1)(2u^2-2u+1).
\]
Hence, for all $u\in[0,1]$, the coefficient bounds satisfy
\[
|A(u)|\le 1, \qquad |B(u)|\le 2, \qquad |C(u)|\le 1.
\]

Finally, since $\tau\in[0,1]$ implies $\tau^2\le\tau$ and $\tau^3\le\tau$, and $Sq\le\tau$, we obtain
\[
\big|\Delta_{y^*,y}^{t}-\Delta_{\mathrm{bin}}(u)\big|
\le |A(u)|\tau + |B(u)|\tau^2 + |C(u)|\tau^3 + |2u-1|Sq
\le 6\tau.
\]
Setting $\zeta^{t}:=\Delta_{y^*,y}^{t}-\Delta_{\mathrm{bin}}(\bar p_y)$ completes the proof.
\end{proof}

\begin{corollary}[One-step confidence ratio of $y$ under SAM vs.\ GD]\label{app:coro-y}
Under the assumptions of Theorem~\ref{app:dynamics}, fix an iteration $t$ and set $\eta'=\eta\,\mu$ and $|\rho|= \kappa\sqrt{|\eta|}$.
Let $y$ be the ground-truth label and $y^*=\arg\max_{j\neq y}p_j^{t}$.
Assume the sign condition $\eta'\tilde\rho^{\,t}>0$.
Assume further that Assumption~\ref{assump:top2-feasible} holds at iteration $t$.

Then there exists $\eta_{0}=\eta_{0}(\vp^{t},\mH_{\vz}^{t},\|\vg^{t}\|,\mu,\kappa,L,\gamma_0)>0$
such that, for all step sizes $0<|\eta|\le \eta_{0}$, the following inequality holds:
\[
\alpha_{y}^{\mathrm{SAM}} \;\ge\; \alpha_{y}^{\mathrm{GD}}.
\]
Moreover, the inequality is strict whenever $\tilde\rho^{\,t}\neq 0$ and $p_{y^*}^{t}\in(0,1)$.
In particular, when $\tilde\rho^{\,t}=0$ (no SAM), equality holds.
\end{corollary}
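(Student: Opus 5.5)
The plan mirrors the proof of Corollary~\ref{app:coro prob}, now with $i=y$, and the inequality runs the other way. By Lemma~\ref{lem:ratio-factor}, $\alpha_y^{(a)}=\big(\sum_j e^{z_j^t}\big)/D_y^{(a)}$. It therefore suffices to show $D_y^{\mathrm{SAM}}\le D_y^{\mathrm{GD}}\,(1-c\,\eta'\tilde\rho^{\,t})$ for some $c>0$ and all small $|\eta|$.

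I define the positive weights $w_j^{(y)}:=\beta_j^{\mathrm{GD}}e^{z_j^t}/D_y^{\mathrm{GD}}$ and the auxiliary sum $\widetilde D_y$ exactly as before. The remainder bound of Lemma~\ref{lem:ratio-factor} gives
\[
D_y^{\mathrm{SAM}}\le e^{2C_1\eta^2}\,\widetilde D_y,\qquad
\frac{\widetilde D_y}{D_y^{\mathrm{GD}}}=\sum_j w_j^{(y)}e^{-x_j},\qquad x_j:=\eta'\tilde\rho^{\,t}\Delta_{j,y}^t .
\]
The inequality $e^x\ge 1+x$ now points the wrong way, so I use $e^{-x}\le 1-x+\tfrac12 x^2e^{|x|}$ instead. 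Since $|\Delta_{j,y}^t|$ is bounded by an absolute constant and $\eta'\tilde\rho^{\,t}=\Theta(\eta^{3/2})$, this yields
\[
\frac{\widetilde D_y}{D_y^{\mathrm{GD}}}\le 1-\eta'\tilde\rho^{\,t}\sum_j w_j^{(y)}\Delta_{j,y}^t+O(\eta^3).
\]
The whole task thus reduces to a strictly positive lower bound on the weighted curvature gap $\Gamma:=\sum_j w_j^{(y)}\Delta_{j,y}^t$.

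To bound $\Gamma$, I split the sum into $j=y$, $j=y^*$ and the tail $j\notin\{y,y^*\}$.
\begin{itemize}
\item The term $j=y$ vanishes because $\Delta_{y,y}^t=0$.
\item For $j=y^*$, Lemma~\ref{lem:top2-curv-approx} together with the right inequality of Assumption~\ref{assump:top2-feasible} gives $\Delta_{y^*,y}^t\ge\Delta_{\mathrm{bin}}(\bar p_y)-6\tau\ge\gamma_0$.
\item For the tail, $(\mH_{\vz}^t\vg^t)_j=p_j^t(p_j^t-C^t)$ with $\|\vg^t\|\le B=\sqrt2$, so each $|\Delta_{j,y}^t|$ is at most a small constant multiple of $B$ (I expect $2B$).
\end{itemize}
Next I compare the weights with $\vp^t$. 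Because $|\eta'(g_j-g_y)|\le |\eta'|\cdot 2B$ is small, each ratio $\beta_j^{\mathrm{GD}}$ lies in $[e^{-1},e]$ once $|\eta|$ is small enough. This gives $w_{y^*}^{(y)}\ge e^{-2}p_{y^*}^t$ and a total tail weight at most $e^{2}\tau$. Combining,
\[
\Gamma\ \ge\ e^{-2}p_{y^*}^t\gamma_0-2Be^{2}\tau .
\]
The left inequality of Assumption~\ref{assump:top2-feasible}, $4Be^2\tau/p_{y^*}^t\le\gamma_0$, is calibrated so that this is at least a fixed positive multiple of $p_{y^*}^t\gamma_0$. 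I take that multiple as the constant $c$.

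The main obstacle is this weight-and-constant bookkeeping. The constants $e^2$ and $B$ in the assumption must match the exact distortion bound $e^{\pm2}$ on $w_j^{(y)}/p_j^t$ and the tail bound on $|\Delta_{j,y}^t|$. I would first fix $\eta_0$ so that $2B|\eta'|\le 1$, and then check the tail estimate directly from the formula $(\mH_{\vz}\vg)_i=p_i(p_i-y_i-C^t)$. If the constants come out slightly off, a constant factor in $c$ absorbs the discrepancy.

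Finally I compare orders in $\eta$. We have
\[
D_y^{\mathrm{SAM}}\le D_y^{\mathrm{GD}}\big(1-c\,\eta'\tilde\rho^{\,t}+O(\eta^3)\big)\big(1+O(\eta^2)\big).
\]
The main term is of order $\eta^{3/2}$ and dominates, so shrinking $\eta_0$ gives $D_y^{\mathrm{SAM}}<D_y^{\mathrm{GD}}$. This yields $\alpha_y^{\mathrm{SAM}}\ge\alpha_y^{\mathrm{GD}}$, strictly when $\tilde\rho^{\,t}\ne0$ and $p_{y^*}^t\in(0,1)$. When $\tilde\rho^{\,t}=0$, both updates coincide and equality holds.
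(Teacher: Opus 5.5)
Your architecture is sound and close to the paper's. You use the same weights $w_j^{(y)}$, the same appeal to Lemma~\ref{lem:top2-curv-approx} and Assumption~\ref{assump:top2-feasible}, and the same final comparison of $\Theta(|\eta|^{3/2})$ against $O(\eta^2)$. Linearizing every term into $\Gamma$, instead of treating the $y^*$ term and the tail separately, is a harmless variant.

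\textbf{The gap.} The constants you actually commit to do not close, and the fallback you offer does not rescue them. With $\eta_0$ chosen only so that $2B|\eta'|\le 1$, you get $\beta_j^{\mathrm{GD}}\in[e^{-1},e]$. Hence $w_{y^*}^{(y)}\ge e^{-2}p_{y^*}^t$ and the tail weight is at most $e^{2}\tau$. The left inequality of Assumption~\ref{assump:top2-feasible} only gives $2Be^{2}\tau\le\tfrac12 p_{y^*}^t\gamma_0$, so your bound becomes
\[
\Gamma\ \ge\ \bigl(e^{-2}-\tfrac12\bigr)\,p_{y^*}^t\gamma_0\ <\ 0 .
\]
Even with the sharper tail bound $|\Delta_{j,y}^t|\le B$ you would only get $(e^{-2}-\tfrac14)p_{y^*}^t\gamma_0<0$. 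The constants in the assumption are fixed, so this is a sign failure. No choice of the constant $c$ can absorb it, contrary to your remark that ``a constant factor in $c$ absorbs the discrepancy''.

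\textbf{The fix.} Use the freedom in $\eta_0$ to make the weights nearly undistorted. Every component of $\vg^t$ lies in $[-1,1]$, so $\beta_j^{\mathrm{GD}}\in[e^{-2|\eta'|},e^{2|\eta'|}]$ and hence $w_j^{(y)}\in[e^{-4|\eta'|}p_j^t,\ e^{4|\eta'|}p_j^t]$. Requiring, for example, $4|\eta'|\le\ln 2$ gives $w_{y^*}^{(y)}\ge\tfrac12 p_{y^*}^t$ and tail weight at most $2\tau$. Then, even with your $2B$ tail bound,
\[
\Gamma\ \ge\ \tfrac12 p_{y^*}^t\gamma_0-4B\tau\ \ge\ \bigl(\tfrac12-e^{-2}\bigr)p_{y^*}^t\gamma_0\ >\ 0 .
\]
This is the control the paper uses. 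It imposes $e^{-4|\eta'|}\ge\tfrac12$ and $e^{4|\eta'|}\le e$, and it uses $|\Delta_{j,i}^t|\le 2\|\mH_{\vz}^t\vg^t\|\le B$, which follows from $\|\mH_{\vz}^t\|_2\le\tfrac12$ and $\|\vg^t\|\le\sqrt2$. The factor $4e^2$ in the assumption then absorbs $e^{\xi B}e^{4|\eta'|}\le e^2$ together with the $\tfrac12$ in $w_{y^*}^{(y)}\ge\tfrac12 p_{y^*}^t$. With this repair the rest of your argument goes through: the second-order remainder $O(\xi^2)=O(\eta^3)$, the $e^{2C_1\eta^2}$ factor, and strictness.
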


\begin{proof}
Let $\xi:=\eta'\tilde\rho^{\,t}>0$, and let $R:=[V]\setminus\{y,y^*\}$.
Define
\[
D_y^{\mathrm{GD}}:=\sum_{j}\beta_j^{\mathrm{GD}}e^{z_j^{t}},\qquad
\widetilde D_y:=\sum_{j}\beta_j^{\mathrm{GD}}e^{z_j^{t}}
\exp\{-\xi\,\Delta_{j,y}^{t}\},\qquad
D_y^{\mathrm{SAM}}:=\sum_{j}\beta_j^{\mathrm{SAM}}e^{z_j^{t}},
\]
with $\Delta_{j,i}^{t}:=(\mH_{\vz}^{t}\vg^{t})_{j}-(\mH_{\vz}^{t}\vg^{t})_{i}$.
By Lemma~\ref{lem:ratio-factor}, $D_y^{\mathrm{SAM}}\le e^{2C_1\eta^2}\widetilde D_y$.

Introduce weights $w_j^{(y)}:=\beta_j^{\mathrm{GD}}e^{z_j^{t}}/D_y^{\mathrm{GD}}$ so that
$\sum_j w_j^{(y)}=1$ and
\[
\frac{\widetilde D_y}{D_y^{\mathrm{GD}}}=\sum_{j}w_j^{(y)}e^{-\xi\Delta_{j,y}^{t}}
=1+w_{y^*}^{(y)}\big(e^{-\xi\Delta_{y^*,y}^{t}}-1\big)
+\sum_{j\in R}w_j^{(y)}\big(e^{-\xi\Delta_{j,y}^{t}}-1\big).
\]
We prove the claim in five steps.
\paragraph{(1) uniform bound on $|\Delta_{j,y}^{t}|$.}
For softmax-CE, $\|\mH_{\vz}^{t}\|_2\le 1/2$ and $\|\vg^{t}\|_2\le \sqrt{2}$, hence
$\|\mH_{\vz}^{t}\vg^{t}\|_2\le \sqrt{2}/2$ and thus for all $i,j$,
\[
|\Delta_{j,i}^{t}|
=|(\mH_{\vz}^{t}\vg^{t})_{j}-(\mH_{\vz}^{t}\vg^{t})_{i}|
\le 2\|\mH_{\vz}^{t}\vg^{t}\|_2
\le B,\qquad B:=\sqrt{2}.
\]

\paragraph{(2) tail weight concentration under top-2 dominance.}
Since each component of $\vg^t$ lies in $[-1,1]$, we have for any $i,j$,
$e^{-2|\eta'|}\le \beta_j^{\mathrm{GD}}\le e^{2|\eta'|}$.
Therefore,
\[
\sum_{j\in R} w_j^{(y)}
=\frac{\sum_{j\in R}\beta_j^{\mathrm{GD}}e^{z_j^t}}{\sum_{k}\beta_k^{\mathrm{GD}}e^{z_k^t}}
\le
\frac{e^{2|\eta'|}\sum_{j\in R}e^{z_j^t}}{e^{-2|\eta'|}\sum_{k}e^{z_k^t}}
=e^{4|\eta'|}\sum_{j\in R}p_j^{t}
=e^{4|\eta'|}\tau .
\]
Moreover, for $|\eta'|$ sufficiently small, $w_{y^*}^{(y)}\ge \tfrac12 p_{y^*}^{t}$ (since
$w_{y^*}^{(y)}=\beta_{y^*}^{\mathrm{GD}}p_{y^*}^{t}/\sum_k \beta_k^{\mathrm{GD}}p_k^{t}
\ge e^{-4|\eta'|}p_{y^*}^t$ and $e^{-4|\eta'|}\ge 1/2$).

\paragraph{(3) control the tail term via centering.}
Using $|e^{x}-1|\le |x|e^{|x|}$ and Step~(1),
\[
\left|\sum_{j\in R}w_j^{(y)}\big(e^{-\xi\Delta_{j,y}^{t}}-1\big)\right|
\le
\sum_{j\in R}w_j^{(y)}\cdot \xi|\Delta_{j,y}^{t}|e^{\xi|\Delta_{j,y}^{t}|}
\le
\xi B e^{\xi B}\sum_{j\in R}w_j^{(y)} .
\]
Combining with Step~(2) and choosing $\eta_0$ sufficiently small so that $\xi B\le 1$ and $4|\eta'|\le 1$
(which implies $e^{\xi B}\le e$ and $e^{4|\eta'|}\le e$), we obtain the explicit bound
\[
\left|\sum_{j\in R}w_j^{(y)}\big(e^{-\xi\Delta_{j,y}^{t}}-1\big)\right|
\le
\xi B \,e^{\xi B}\, e^{4|\eta'|}\,\tau
\le
\xi B e^2\,\tau .
\]

By Assumption~\ref{assump:top2-feasible} (the feasibility side),
$\tau \le \frac{p_{y^*}^t}{4Be^2}\gamma_0$.
Hence
\[
\xi B e^2\tau \le \frac{\xi}{4}p_{y^*}^{t}\gamma_0
\le \frac{\xi}{2}w_{y^*}^{(y)}\gamma_0 ,
\]
where the last inequality uses $w_{y^*}^{(y)}\ge \tfrac12 p_{y^*}^{t}$.

\paragraph{(4) top-2 main term and curvature gap.}
By Lemma~\ref{lem:top2-curv-approx},
$\Delta_{y^*,y}^{t}\ge \Delta_{\mathrm{bin}}(\bar p_y)-6\tau$.
Hence Assumption~\ref{assump:top2-feasible} implies $\Delta_{y^*,y}^{t}\ge \gamma_0>0$.
Since $\xi\Delta_{y^*,y}^{t}\ge 0$, the second-order Taylor bound gives
$e^{-u}\le 1-u+u^2/2$ for $u\ge 0$, hence
\[
w_{y^*}^{(y)}\big(e^{-\xi\Delta_{y^*,y}^{t}}-1\big)
\le
-\xi w_{y^*}^{(y)}\Delta_{y^*,y}^{t}
+\frac{\xi^2}{2}w_{y^*}^{(y)}(\Delta_{y^*,y}^{t})^2
\le
-\xi w_{y^*}^{(y)}\gamma_0
+\frac{\xi^2}{2}w_{y^*}^{(y)}B^2 .
\]
Choose $\eta_0$ further so that $\xi\le \frac{\gamma_0}{4}$; since $B^2=2$, this yields
$\frac{\xi^2}{2}w_{y^*}^{(y)}B^2\le \frac{\xi}{4}w_{y^*}^{(y)}\gamma_0$.

\paragraph{(5) combine bounds.}
Putting Steps~(3)--(4) together,
\[
\frac{\widetilde D_y}{D_y^{\mathrm{GD}}}
\le
1-\xi w_{y^*}^{(y)}\gamma_0
+\frac{\xi}{4}w_{y^*}^{(y)}\gamma_0
+\frac{\xi}{2}w_{y^*}^{(y)}\gamma_0
=
1-\frac{\xi}{4}w_{y^*}^{(y)}\gamma_0 .
\]
Therefore $\widetilde D_y \le D_y^{\mathrm{GD}}\big(1-\frac{\xi}{4}w_{y^*}^{(y)}\gamma_0\big)$.
Finally, using $D_y^{\mathrm{SAM}}\le e^{2C_1\eta^2}\widetilde D_y$ and the scaling
$\xi=\eta'\tilde\rho^{\,t}=\Theta(\eta^{3/2})$ while $e^{2C_1\eta^2}=1+O(\eta^2)$,
we may further reduce $\eta_0$ so that the $O(\eta^2)$ factor is dominated by
$\frac{\xi}{4}w_{y^*}^{(y)}\gamma_0$, yielding $D_y^{\mathrm{SAM}}\le D_y^{\mathrm{GD}}$.

Since $\alpha_i=\big(\sum_j e^{z_j^t}\big)/D_i$, this implies
$\alpha_y^{\mathrm{SAM}}\ge \alpha_y^{\mathrm{GD}}$.
Strictness holds when $\xi>0$, $w_{y^*}^{(y)}>0$ and $\gamma_0>0$ (in particular when
$\tilde\rho^{\,t}\neq 0$ and $p_{y^*}^t\in(0,1)$); if $\tilde\rho^{\,t}=0$ then $\xi=0$ and equality holds.
\end{proof}

\section{Implementation}\label{implementation}
\begin{algorithm}[H]
\caption{Logits-SAM (one training step)}
\label{pseudocode}
\begin{algorithmic}[1]
\Require model parameters $\theta$, batch $(x,y^+,y^-)$, radius $\rho$
\State $W \gets$ \texttt{lm\_head.weight}
\State \textbf{Forward pass:}
\State \quad Compute hidden states $H$ and pre-perturbation loss $\ell_{\text{pre}}$
\State $g \gets \nabla_{W}\ell_{\text{pre}}$
\State $e \gets \rho\, g / \|g\|$
\State \textbf{Perturbed forward pass:}
\State \quad $\texttt{logits} \gets \operatorname{linear}(H,\; W+e)$
\State \quad Compute perturbed loss $\ell_{\text{post}}$
\State \textbf{Update:}
\State \quad Backpropagate $\ell_{\text{post}}$
\State \quad Optimizer step on $\theta$
\end{algorithmic}
\end{algorithm}

\section{Equivalence Between Theoretical and Practical Settings}\label{app:equivalence}
\begin{table}[h]
\centering
\caption{Comparison between theoretical and practical settings of DPO with SAM. 
Although the signs differ for $y^-$, the resulting dynamics are equivalent. 
For $y^+$, the settings coincide.}
\label{tab:theory-vs-practice}
\resizebox{\textwidth}{!}{%
\begin{tabular}{lcccc}
\toprule
Class & Objective & Learning rate & $\rho$ & Setting \\
\midrule
$y^+$ & Positive objective $f=-\log p$ & Positive ($\eta>0$) & Positive & Theory = Practice \\
$y^-$ (Theory)   & Positive objective $f=-\log p$ & Negative ($\eta<0$) & Negative & Theory \\
$y^-$ (Practice) & Negative objective $f=\log p$  & Positive ($\eta>0$) & Positive & Practice \\
\bottomrule
\end{tabular}%
}
\end{table}

\paragraph{Equivalence of sign conventions for $y^-$.}

Theoretical setting ($f=-\log p$, $\eta^-<0$, $\rho^-<0$):
\[
\vtheta_{t+1}^{\text{theory}}
= \vtheta_t - \eta^- \nabla f(\vtheta_t).
\]

Practical setting uses $\tilde f=-f$, $\tilde\eta=-\eta^->0$:
\[
\vtheta_{t+1}^{\text{prac}}
= \vtheta_t - \tilde\eta \nabla \tilde f(\vtheta_t)
= \vtheta_t - (-\eta^-)(-\nabla f(\vtheta_t))
= \vtheta_{t+1}^{\text{theory}}.
\]

For SAM, theoretical perturbation and update:
\[
\boldsymbol{\epsilon}_t^{\text{theory}}
= \rho^-\, \frac{\nabla f(\vtheta_t)}{\|\nabla f(\vtheta_t)\|},\qquad
\vtheta_{t+1}^{\text{theory}}
= \vtheta_t - \eta^- \nabla f(\vtheta_t+\boldsymbol{\epsilon}_t^{\text{theory}}).
\]

Practical setting uses $\tilde f=-f$, $\tilde\rho=-\rho^->0$, $\tilde\eta=-\eta^->0$:
\[
\boldsymbol{\epsilon}_t^{\text{prac}}
= \tilde\rho\, \frac{\nabla \tilde f(\vtheta_t)}{\|\nabla \tilde f(\vtheta_t)\|}
= \rho^-\, \frac{\nabla f(\vtheta_t)}{\|\nabla f(\vtheta_t)\|}
= \boldsymbol{\epsilon}_t^{\text{theory}},
\]
\[
\vtheta_{t+1}^{\text{prac}}
= \vtheta_t - \tilde\eta \nabla \tilde f(\vtheta_t+\boldsymbol{\epsilon}_t^{\text{prac}})
= \vtheta_{t+1}^{\text{theory}}.
\]

\section{Additional experimental details}\label{app:exp}
\paragraph{Benchmark details.}
\textbf{AlpacaEval 2} \citep{dubois2024length} is a large-scale preference benchmark for open-ended instruction following that uses LLM-as-a-judge calibrated to human preferences; its evaluation set contains 805 single-turn instructions, and models are typically compared in pairwise settings against a baseline.
\textbf{Arena-Hard v0.1} \citep{li2024crowdsourceddatahighqualitybenchmarks} is a challenging subset of difficult user instructions mined from Chatbot Arena; it enables fine-grained, head-to-head comparisons between models via pairwise judging and comprises 500 hard prompts.
\textbf{MT-Bench} \citep{zheng2023judgingllmasajudgemtbenchchatbot} is a multi-turn dialogue benchmark that tests a model’s ability to handle diverse conversational tasks across several categories; the standard evaluation set consists of 80 multi-turn questions.

\paragraph{Training details.} For experiments on Pythia-2.8B, we use two NVIDIA A100 GPUs with data-parallel training under DDP; for Mistral-7B, we use four NVIDIA A100 GPUs with parallel training via DeepSpeed ZeRO-3 \citep{rasley2020deepspeed}.

\begin{table}
\centering
\setlength{\tabcolsep}{6pt}
\renewcommand{\arraystretch}{1.2}
\begin{tabular}{l p{0.50\linewidth} p{0.25\linewidth}}
\toprule
Method & Objective & Hyperparameter \\
\midrule
SLiC\text{-}HF &
$\displaystyle \max\!\bigl(0,\; \delta - \log \pi_\theta(y_w \mid x) + \log \pi_\theta(y_l \mid x)\bigr)\;-\;\lambda \log \pi_\theta(y_w \mid x)$ &
$\lambda \in \{0.1,\,0.5,\,1.0,\,10.0\}$; $\delta \in \{0.5,\,1.0,\,2.0,\,10.0\}$ \\
CPO &
$\displaystyle - \log \sigma\!\bigl(\beta \log \pi_\theta(y_w \mid x) - \beta \log \pi_\theta(y_l \mid x)\bigr)\;-\;\lambda \log \pi_\theta(y_w \mid x)$ &
$\lambda = 1.0$; $\beta \in \{0.01,\,0.05,\,0.1\}$ \\
\bottomrule
\end{tabular}
\caption{Objectives and hyperparameters for SLiC-HF and CPO.}
\label{tab:slic-cpo}
\end{table}

\begin{table}
\centering
\renewcommand{\arraystretch}{1.2}
\setlength{\tabcolsep}{10pt}
\begin{tabular}{lcc}
\hline
Method & Pythia-2.8B & Mistral-7B \\
\hline
DPO     & $1\times 10^{-4}$ & $1\times 10^{-5}$ \\
SLiC-HF & $1\times 10^{-3}$ & $1\times 10^{-4}$ \\
CPO     & $1\times 10^{-4}$ & $1\times 10^{-5}$ \\
\hline
\end{tabular}
\caption{Choice of $\rho$ for logits-SAM.}
\label{tab:rho-selection}
\end{table}

\section{LLM usage statement}
In preparing this manuscript, we employed an LLM as an auxiliary tool. Specifically, the LLM was used to assist with proofreading, formatting, grammar checking, and verification of mathematical proofs. All content and conclusions remain the responsibility of the authors.

\end{document}